\newtheorem{theorem}{Theorem}
\newtheorem{lemma}{Lemma}
\newif\ifsup
\newcommand{\Algo}[1]{\textsc{#1}}
\renewcommand{\vec}[1]{\boldsymbol{#1}}
\newcommand{\biota}{\vec{\iota}}
\newcommand{\ping}{t^{\textsc{cis}}}
\newcommand{\crawl}{t^{\textsc{cr}}}
\newcommand{\eff}[1]{\tau^{\textsc{ef}}(#1)}
\newcommand{\calE}{\mathcal{E}}
\newcommand{\calR}{\mathcal{R}}
\newcommand\E{\mathbb{E}}
\DeclareMathOperator*{\argmax}{\arg \max}
\begin{document}
\title{A Scalable Crawling Algorithm Utilizing Noisy Change-Indicating Signals}

\author{R\'obert Busa-Fekete}
\email{busarobi@google.com}
\affiliation{%
  \institution{Google Research}
  \city{New York}
  \country{USA}
}

\author{Julian Zimmert}
\email{zimmert@google.com}
\affiliation{%
  \institution{Google Research}
  \city{Berlin}
  \country{Germany}
}

\author{Andr\'as Gy\"orgy}
\email{agyorgy@google.com}
\affiliation{%
  \institution{Google DeepMind}
  \city{London}
  \country{UK}
}

\author{Linhai Qiu}
\email{linhai@google.com}
\affiliation{%
  \institution{Google}
  \city{Sunnyvale}
  \country{USA}
}

\author{Tzu-Wei Sung}
\email{twsung@google.com}
\affiliation{%
  \institution{Google}
  \city{Sunnyvale}
  \country{USA}
}

\author{Hao Shen}
\email{shenhao@google.com}
\affiliation{%
  \institution{Google}
  \city{Sunnyvale}
  \country{USA}
}

\author{Hyomin Choi}
\email{hyominchoi@google.com}
\affiliation{%
  \institution{Google}
  \city{Sunnyvale}
  \country{USA}
}

\author{Sharmila Subramaniam}
\email{sharmila@google.com}
\affiliation{%
  \institution{Google}
  \city{Sunnyvale}
  \country{USA}
}

\author{Li Xiao}
\email{lixiao@google.com}
\affiliation{%
  \institution{Google}
  \city{Sunnyvale}
  \country{USA}
}

\renewcommand{\shortauthors}{Busa-Fekete et al.}

\begin{abstract}
  Web refresh crawling is the problem of keeping a cache of web pages fresh, that is, having the most recent copy available when a page is requested, given a limited bandwidth available to the crawler. Under the assumption that the change and request events, resp., to each web page follow independent Poisson processes, the optimal scheduling policy was derived by \citet{Azar8099}. In this paper, we study an extension of this problem where side information indicating content changes, such as various types of web pings, for example, signals from sitemaps, content delivery networks, etc., is available. Incorporating such side information into the crawling policy is challenging, because (i) the signals can be noisy with false positive events and with missing change events; and (ii) the crawler should achieve a fair performance over web pages regardless of the quality of the side information, which might differ from web page to web page. We propose a scalable crawling algorithm which (i) uses the noisy side information in an optimal way under mild assumptions; (ii) can be deployed without heavy centralized computation; (iii) is able to crawl web pages at a constant total rate without spikes in the total bandwidth usage over any time interval, and automatically adapt to the new optimal solution when the total bandwidth changes without centralized computation. Experiments clearly demonstrate the versatility of our approach.
\end{abstract}

\begin{CCSXML}
<ccs2012>
 <concept>
  <concept_id>00000000.0000000.0000000</concept_id>
  <concept_desc>Do Not Use This Code, Generate the Correct Terms for Your Paper</concept_desc>
  <concept_significance>500</concept_significance>
 </concept>
 <concept>
  <concept_id>00000000.00000000.00000000</concept_id>
  <concept_desc>Do Not Use This Code, Generate the Correct Terms for Your Paper</concept_desc>
  <concept_significance>300</concept_significance>
 </concept>
 <concept>
  <concept_id>00000000.00000000.00000000</concept_id>
  <concept_desc>Do Not Use This Code, Generate the Correct Terms for Your Paper</concept_desc>
  <concept_significance>100</concept_significance>
 </concept>
 <concept>
  <concept_id>00000000.00000000.00000000</concept_id>
  <concept_desc>Do Not Use This Code, Generate the Correct Terms for Your Paper</concept_desc>
  <concept_significance>100</concept_significance>
 </concept>
</ccs2012>
\end{CCSXML}

\ccsdesc[500]{World Wide Web}
\ccsdesc[300]{Web searching and information discovery}
\ccsdesc{Web search engines}
\ccsdesc[100]{Web crawling}

\keywords{Search \& Information retrieval, Web page indexing, Crawling policy}

\maketitle

\section{Introduction}

Efficient web refresh crawling is one of the most fundamental data management problems in web search. Web pages are the simplest and most ubiquitous source of information on the Internet, therefore, extracting and organizing their information content is a crucial task. The first step in this process is acquiring the information, which means that web pages are regularly crawled to ensure that the information at the search engine is up-to-date. However, the scale of the problem is enormous since, as of today, there are trillions of web pages available online, and hence efficient, resource-aware crawling algorithms are of great practical interest.

In this paper, we consider the problem of designing scheduling policies for refreshing web pages in a local cache with the objective of maximizing the probability that incoming requests to the cache are processed and served with the latest version of the page. An efficient web page caching policy, thus, aims to schedule refresh crawl events (i.e., crawling a web page) between subsequent content change and content request events in order to have a fresh copy upon request. The issue of course is that the scheduler needs to check if a web page has changed, hence web pages need to be polled regularly, which uses bandwidth, and polling them only provides partial information about their change process, such as a single bit indicating whether the web page has changed since it was last refreshed \cite{cho2003estimating}. 

It has been estimated that up to 53\% of the crawling traffic of major search engines is redundant \citep{cloudfare_blogpost} in the sense that the crawler visits an unchanged version of a web page.
There is a growing awareness of the environmental and monetary cost of this redundancy, which motivated initiatives to provide additional signals for crawlers, allowing them in theory to improve their crawling policies.
One form of this effort is for web servers to actively send so-called \emph{web pings}, which notify interested parties about content changes, so that some active crawling traffic for detecting changes can be saved. There are different types of web pings, such as signals from sitemaps, content delivery networks, web servers, etc., which can all provide content change notifications to downstream services, such as web crawlers or proxy servers.
\citet{kolobov2019staying} recognized the importance of side information and proposed an algorithm under the assumption of complete and accurate change information for URLs (i.e., web pages;  throughout the paper we use the expressions URLs and web pages interchangeably) with side information.
We provide empirical evidence and conceptual arguments that this assumption is too strong and
extend the model to cover noisy signals as well.

In the classical crawling setup, the change and request sequences are assumed to obey certain stochastic processes~\cite{ChGa03}, which allows to derive policies with some optimality guarantee, even if the change and request sequences are not fully observable. The most standard assumption is that the sequence of changes and requests, resp., to each web page are independent Poisson processes \cite{wolf2002optimal,ChGa03,cho2003estimating}. For this setting, \citet{ChGa03} derived a convex optimization problem to find the optimal refresh rates for the web pages if the arrival rates of the Poisson processes are known.
\citet{Azar8099} presented an explicit efficient algorithm to find the solution, which, in fact, crawls every web page at a fixed, page-specific rate.  However, this results in a non-uniform total crawl rate, which is problematic in practice (as the resources available to the crawler have to match the peak rate). 
Therefore, they also proposed a method to transfer a variable-rate continuous-time (\emph{continuous} for short) crawling policy with page-specific fixed crawl rates to a policy with constant total crawl rate; such a policy schedules crawl events at regular intervals, that is, at discrete time steps (hence, such policies will be referred to as \emph{discrete} policies).
This reduction relies crucially on crawling all pages at a fixed rate. In the presence of side information, the optimal continuous policy does not satisfy fixed rates and prior work has not provided reductions to discrete policies for this setup \citep{kolobov2019staying}.

An alternative method to derive discrete policies from continuous ones based on Lagrange multipliers has been proposed in a Google blog post \citep{google_blogpost}.
A benefit of their reduction is that one can modify problem parameters such as the change rate of a URL or its importance  on-the-fly without requiring a global recomputation of the solution. This makes this method especially appealing for real-world scenarios where such parameters are continuously estimated \citep{cho2003estimating,UpadhyayBKPS20}.\footnote{In this work we do not consider the estimation of the standard parameters of the Poisson model (i.e., the change rate), but discuss the estimation of the parameters describing the behavior of the CI signals in Appendix~\ref{sec:estimation}.}

Another line of research leverages ideas from online learning and reinforcement learning to learn a crawling policy beyond the Poisson model \citep{kolobov2019staying,KolobovBZ20}; however, reinforcement learning methods are computationally infeasible in large-scale environments.

In this paper we build on the Poisson model and the continuous to discrete reduction via Lagrange-multipliers and make the following contributions:
\begin{enumerate}
    \item Motivated by the empirical evaluation of information available to real-world crawlers, we extend the crawling problem with imperfect (i.e., \emph{noisy}) \emph{change-indicating signals} (\emph{CI signals} or \emph{CISs}, for short).
    We derive the optimal (continuous) crawling strategy for this model under a global bandwidth constraint.
    \item Based on the above policy, we also derive a discrete crawling strategy that is able to use CISs. This strategy is practically appealing for several reasons: (i) up to a final $\argmax$ operation, it is fully decentralized and parallelizable over web pages both in computation and memory; (ii) the total crawl rate is constant over time without spikes in bandwidth usage over any time interval; (iii) the method automatically adapts to changes of the bandwidth constraint without centralized computation; and (iv) regular changes to the estimations of model parameters do not induce any additional computational overhead.
    \item We provide an extensive empirical study showing a clear benefit of our method on semi-synthetic data.
\end{enumerate}

The paper is organized as follows: In Section~\ref{sec:cis empirical} we provide empirical evaluation of CIS justifying the need of our extended model. In Section~\ref{sec:setup}, we introduce the crawling setup, its objective function and the formal definition of CI signals. In Section~\ref{sec:continuous}, we recall the optimization view of the crawling problem which allows to compute optimal continuous policies, and introduce a principled way to incorporate change-indicating signals into this setting. We introduce a general approach to derive easy-to-implement and scalable greedy polices based on continuous ones in Section~\ref{sec:policy_discrete}. Experimental studies are presented in Section \ref{sec:experiments}, followed by our conclusions and future plans in Section~\ref{sec:conclusion}. 
All the proofs of our theoretical results are relegated
to Appendix~\ref{sec:proofs}, and additional experiments are presented in Appendices~\ref{app:emp_rates_comp}, \ref{sec:burnin} and~\ref{sec:delayed_policy}.

\section{Change-Indicating Signals}\label{sec:cis empirical}
In this section, we present some basic statistics about change detection signals in general and in the dataset we use in our experiments.

We take the dataset from \cite{kolobov2019staying}, published in 2019, as a starting point. It contains 18,532,314 URLs which were crawled intensively over a period of 2 weeks to compute their empirical change rates. Additionally, they recorded for which sites they obtained side-information and the importance of the pages according to the Bing production crawler based on PageRank and popularity.

While only 4\% of the URLs has side information, they represent 26.4\% of importance-adjusted weight, which is why the question of including side information is relevant even when the adoption is not wide-spread yet. In their work, \citet{kolobov2019staying} assume that these CISs are noiseless, that is, for every change there is a CIS generated, and every CIS corresponds to a valid change. 

Concerning the quality of CISs, it is worth noting that defining a change itself is a complex question, since what counts as a change depends on the actual use case: For example, should correcting a typo on a web page count as a change, or changing the header, or some on-the-fly generated content, such as an ad? Because of this, there is no unified definition of what counts as change; \citet{kolobov2019staying} use the non-public change definition of their (Bing) production crawler, we also use our own definition, and CIS providers use their own, as well. As a result, CISs are necessarily noisy from the viewpoint of their users (even if they are perfect according to the change definition of their providers), and hence the model of complete and error free signals, used in \cite{kolobov2019staying}, is too simplistic to capture the real world.

In fact, we measured the quality of the sitemap signals for the web pages in their dataset with declared (perfect) sitemaps and found that the \emph{precision} of these signals (i.e., the proportion of time there is a change following a signal) is below 0.2, and their \emph{recall} (i.e., the proportion of time when a change is accompanied with a CIS) is below 0.5, hence they are far from perfect.
To give more insights into the quality of CISs, we measured the distribution of precision and recall of all the web pages we have sitemap signals for, and the resulting importance-weighted histograms (using our own confidential definition of importance, which is also based on PageRank and should be similar to that of Bing) are shown in  Figure~\ref{fig:prec_rec}.  These results clearly demonstrate that sitemap signals are noisy, but they are in general better than what we found for the URLs in the dataset of \cite{kolobov2019staying}. Importantly, there are very few pages which have perfect signal with precision and recall that are higher than 0.8.
\begin{figure}
  \centering
  \includegraphics[width=0.9\linewidth]{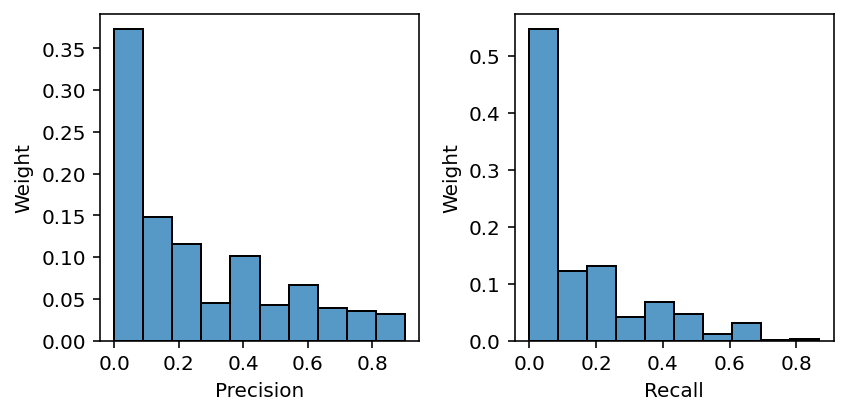}
  \caption{ \label{fig:prec_rec}
  Histogram of precision and recall for URLs with sitemap signals. The histogram is computed by weighting the pages with their importance.}
\end{figure}

\section{Setup}
\label{sec:setup}
\paragraph{Classical crawling model.}
We are given $m$ web pages to be cached that are indexed by unique URLs. For the sake of presentation, we index them by $[m]=\{ 1, \dots, m \}$ and we consider the set of URLs to be fixed. The classical caching problem for a web page $i$ is modelled by three processes: 1) the request process; 2) the change process; and 3) the refresh (or crawling) process. The request and change processes are assumed to be independent Poisson processes with parameters $\mu_i$ and $\Delta_i$, respectively. We also assume that these Poisson processes are independent across all the web pages. In a practical system, the request processes can be fully observed, while the change process of a web page can only be partially observed by comparing its last cached version to the current one when a crawl event is triggered. The goal of caching is to have a fresh copy of the web page when a request event happens. So a good crawling policy tries to refresh each web page between every subsequent change and request events. Without further constraints, the naive solution is to continuously refresh each web page all the time, which is obviously not an implementable policy in practice (due to limitations in computations and communications). To obtain a more realistic setting, we assume a \emph{bandwidth constraint}, which either limits the overall frequency of crawl events in expectation, or sets a hard constraint on the minimal time interval between two subsequent crawl events.

\paragraph{Policies.}
A crawling policy decides at any time $t\in[0,\infty)$ (potentially based on the history up to time $t$) whether it crawls web pages, thereby over time generating an infinite sequence of crawl events: crawl events are specified by time-web page pairs where $(t, i)$ defines a crawl event for web page $i$ at time $t$, and the sequence of crawl events is denoted as
$\mathcal{T}=((\crawl_1,i_1),(\crawl_2,i_2),\dots)$,  where $\crawl_s\leq \crawl_{s+1}$ for any $s\in\mathbb{N}$.
We denote the crawl-event sequence of an individual web page $i$ by $\mathcal{T}_i =(\crawl_{i,1},\crawl_{i,2},\dots),$ where $t\in\mathcal{T}_i$ if and only if $(t,i)\in\mathcal{T}$.
As mentioned before, we consider two policy classes under bandwidth-budget constraints:

\begin{itemize}
    \item {\bf Continuous:} the average number of crawl events over time is asymptotically upper bounded by a budget $R$ almost surely (a.s.):
    $\limsup_{T\rightarrow \infty}\frac{1}{T}\max\{i\in\mathbb{N}\,|\,t_i\leq T\}
     \le R$ almost surely.
    \item {\bf Discrete:} the crawl events are uniformly triggered at fixed times: $\crawl_j = \frac{j}{R}$.
\end{itemize}

The policies in the continuous class specify the time of each crawl event, while the policies in the discrete class choose a web page to crawl at each time step $\crawl_j$ by treating the time when a crawl event can happen as a constraint. Furthermore, an optimal solution of the continuous class only needs to satisfy the total bandwidth constraint asymptotically, while an optimal solution of the discrete class satisfies the total bandwidth constraint over any time interval. The infrastructure and bandwidth constraints as well as the volatile web environment make the continuous class of policies less practical. We have assumed a uniform, fixed rate for the discrete class of policies. However, our method can be easily applied to other discrete classes with non-uniform rates (e.g. when the total bandwidth changes). Optimizing over the discrete class is a hard combinatorial problem and it is significantly easier to find the optimal solution in the continuous class.
Prior work followed the recipe of first computing the optimal solution in the continuous case and then approximating the continuous crawl rate in the discrete policy class \citep{Azar8099}; in this paper we also take this approach.

\paragraph{CI signals}
We extend the classical model described above by incorporating CI signals as additional observations. We assume that for each web page $i$, for every change a CI signal\footnote{It is straightforward to extend the model to multiple independent sources of CI signals. We consider a single signal for the sake of presentation.} is available with probability $\lambda_i$ for some $\lambda_i\in[0,1]$, (independently for each change). Given this assumption, the (Poisson) change process for web page $i$ can be split into two independent Poisson processes, a signalled change
process with rate $\lambda_i\Delta_i$ and an unsignalled -- and hence also directly unobserved -- change process with rate $(1-\lambda_i)\Delta_i$.
Additionally, we also allow the presence of false CI signals, and assume that these signals are generated by independent Poisson processes with rate $\nu_i$ for each web page $i$.
The decision maker cannot distinguish whether a CI signal was produced by the signalled change process or the false signal process.
In summary, for each web page $i$, one observes an additional sequence of CI signals $\mathcal{C}_i=(\ping_{i,1},\ping_{i,2},\dots)$ generated by a Poisson process with rate $\gamma_i=\lambda_i\Delta_i+\nu_i$, which contains both true and false CI signals.
We denote the rate of the unobserved change process by $\alpha_i = (1-\lambda_i)\Delta_i$.\footnote{ 
\citet{kolobov2019staying} study the regime where $\nu_i=0$ (noiseless) and $\lambda_i\in\{0,1\}$ (complete or no information at all).}
The quality of CISs for page $i$ is often measured by their precision and recall: \emph{precision}, the probability that a CI signal corresponds to a real change, is given by $\lambda_i \Delta_i/\gamma_i$, while \emph{recall}, the the probability that a change is indicated by a signal, is equal to $\lambda_i$ by definition.

The sequence of change and request events for a page are illustrated in Figure~\ref{fig:changesequences}.
\begin{figure}[h]
  \centering
  \includegraphics[width=0.6\linewidth]{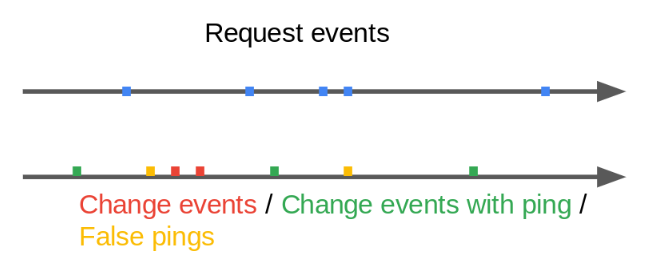}
  \caption{Model of the change, request and CI signal processes.}
  \label{fig:changesequences}
\end{figure}

\paragraph{Objective.}
The objective of the crawl scheduler is to maximize the expected number of requests that are served with a fresh copy of the corresponding web page, defined, for a policy $\pi$, as\footnote{While we define limiting quantities in terms of $\liminf$ and $\limsup$ for generality, they can be replaced with $\lim$ for the optimal policies we derive, as the corresponding limits exist.}
\[
O(\pi) := \liminf_{N\rightarrow \infty}\frac{1}{N}\sum_{n=1}^N\E_\pi\left[\mathbb{I}\left\{\text{\tiny{the $n$-th request hits a fresh copy}}\right\}\right]\,.
\]
In the notation we explicitly included the dependence on $\pi$ to the expectation; note, however, that the expectation is also taken over the randomness induced by the environment, in particular by the different Poisson processes. This dependence is also omitted later.

To simplify this objective, we recall two important properties of Poisson processes:
\begin{itemize}
    \item {\bf Uniformity}:  Events that obey a Poisson process will be uniformly distributed over any time interval.
    \item {\bf Merging}: A collection of independent Poisson processes with rates $(\mu_1, \dots, \mu_m)$ is equivalent to a single Poisson process with rate $\mu= \sum_i \mu_i$ where each time an event is generated, it is assigned a random index from $[m]$ drawn from a discrete distribution with parameters $(\mu_1/\mu, \dots, \mu_m/\mu)$.
\end{itemize}
We assume the request events obey Poisson processes, therefore due to uniformity and merging, we can rewrite the objective as
\[
O(\pi) =\sum_{i=1}^m\liminf_{T\rightarrow\infty}\frac{\tilde{\mu}_i}{T}\int_{0}^T\mathbb{P}_\pi\left[E_i(t)\,|\,\mathcal{H}_t\right]\,dt\,,
\]
where $E_i(t)$ denotes the event that page $i$ is fresh at time $t$, $\tilde\mu_i=\mu_i/(\sum_{j=1}^m\mu_j)$ is the normalized importance and $\mathcal{H}_t=(\mathcal{T}_i\cap[0,t),\mathcal{C}_i\cap[0,t) )_{1,\dots,m}$ is the conditioning on all observable events before time $t$. 
Let 
\[\tau_i^{\textsc{elap}}(t) = t-\max\{\crawl_{i,k}\,|\,k\in\mathbb{N},\,\crawl_{i,k}\leq t\}\]
be the elapsed time since the last crawl for web page $i$ at time $t$ and
\[n^{\textsc{cis}}_i(t)=\left|\mathcal{C}_i\cap(t-\tau_i^{\textsc{elap}},t]\right|\]
be the number of CI signals received since the last crawl event of web page $i$.
Then the conditional probability of page $i$ being fresh is
\begin{equation}
\label{eq:freshnessprob-cond}
\mathbb{P}\left[E_i(t) \,|\,\mathcal{H}_t\right]=\exp(-\alpha_i\tau^{\textsc{elap}}_i(t))\cdot\left(\frac{\nu_i}{\gamma_i}\right)^{n^{\textsc{cis}}_{i}(t)}\,.
\end{equation}
The first factor is the probability that a Poisson process with rate $\alpha_i$ did not produce any event in an interval of length $\tau_{i}^{\textsc{elap}}(t)$. The second factor follows from the fact that all CI signals are i.i.d. events and with every event, the probability that the CI signal was a false positive is $\frac{\nu_i}{\gamma_i}$.
We can reduce the state (probability of freshness) to a single number by observing that the freshness reduction of each CI signal is equivalent to that of not observing any signal for a certain time. Motivated by this fact, let 
$$
\beta_i = -\frac{\log(\nu_i/\gamma_i)}{\alpha_i}\quad\text{and}\quad\tau^{\textsc{eff}}_{i}(t)=\tau^{\textsc{elap}}_{i}(t)+\beta_in^{\textsc{cis}}_{i}(t)
$$
be the time-equivalent of a single CI signal and the effective elapsed time, respectively. Then \eqref{eq:freshnessprob-cond} equals to $\exp(-\alpha_i\tau_{i}^{\textsc{eff}}(t))$, and
the objective becomes
\begin{align}
O(\pi)=\sum_{i=1}^m\liminf_{T\rightarrow\infty}\tilde\mu_i\E_{\pi, t\sim\text{unif}([0,T])}\left[\exp(-\alpha_i\tau_{i}^{\textsc{eff}}(t))\,|\,\mathcal{H}_t\right]\,. \label{eq:opt_objective}
\end{align}


\section{Continuous policies}
\label{sec:continuous}

In this section, we focus on how to find optimal continuous policies which optimize \eqref{eq:opt_objective} under some bandwidth constraint. Our derivation can be viewed as a generalization of the approach of \citet{Azar8099}, but here we make use of noisy CISs. All proofs are deferred to Appendix~\ref{sec:proofs}. First, we narrow down the policy class of potentially optimal policies.
\begin{lemma}
\label{lem: threshold policy}
Any policy optimizing the objective $O(\pi)$ defined in (\ref{eq:opt_objective}) has a decision rule such that it triggers a crawl event $(\tau,i)$ when
$\tau^{\textsc{eff}}_{i,\tau} \geq \iota_i$
for some fixed threshold vector $\biota=(\iota_1,\ldots,\iota_m)\in(0,\infty]^m$.
\end{lemma}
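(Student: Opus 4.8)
The plan is to argue by an exchange/interchange argument that any optimal policy must be of threshold type in the effective elapsed time $\tau^{\textsc{eff}}_i$. First I would observe that, by the decomposition in \eqref{eq:opt_objective}, the objective is additive over web pages, and the only coupling between pages is the global bandwidth constraint $R$. Hence it suffices to understand, for a fixed per-page crawl rate $r_i$ (with $\sum_i r_i \le R$), how a single page's contribution $\liminf_T \tilde\mu_i\, \E[\exp(-\alpha_i \tau^{\textsc{eff}}_i(t))]$ is maximized subject to crawling page $i$ at rate $r_i$. The key structural fact is that between crawls the state $\tau^{\textsc{eff}}_i(t)$ is a nondecreasing, piecewise-linear process (slope $1$, with upward jumps of size $\beta_i$ at each CI signal), and a crawl resets it to $0$. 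Since $\exp(-\alpha_i \cdot)$ is strictly decreasing, the "freshness value" of page $i$ is a strictly decreasing function of $\tau^{\textsc{eff}}_i$, so intuitively a crawl is most valuable precisely when $\tau^{\textsc{eff}}_i$ is large.

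The core step is the interchange argument: suppose an optimal policy, at some decision point, crawls page $i$ when its effective elapsed time is small while at some other reachable decision point it declines to crawl $i$ when the effective elapsed time is larger. I would show that swapping these two decisions — moving the crawl to the state with the larger $\tau^{\textsc{eff}}_i$ — does not decrease the objective and does not violate the bandwidth constraint (the number of crawls is unchanged). Concretely, because the contribution is an integral of $\exp(-\alpha_i \tau^{\textsc{eff}}_i(t))$ and a reset removes a "tail" of accumulated effective time, resetting at the larger value removes a larger tail, yielding at least as much freshness. Making this rigorous requires a coupling: run the CI-signal and change processes on a common probability space for the original and modified policies, and compare the two freshness integrals path by path, using that $\tau^{\textsc{eff}}_i$ is monotone between resets so the induced orderings of states are preserved. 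Iterating this exchange (or passing to the limit of such improvements) shows the optimal decision rule for page $i$ depends on the history only through $\tau^{\textsc{eff}}_i$ and is monotone in it, i.e. it is a threshold rule with some $\iota_i\in(0,\infty]$; the value $\iota_i=\infty$ covers the case where the optimal rate $r_i$ is $0$ (page never crawled), and $\iota_i>0$ because crawling at $\tau^{\textsc{eff}}_i=0$ is wasteful (it consumes bandwidth with zero freshness gain).

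I expect the main obstacle to be the technical handling of the $\liminf$/long-run-average objective together with the continuous-time, history-dependent policy class: one must ensure the exchange argument can be carried out "globally" (over infinitely many crawl events) rather than just for a single pair, and that the resulting threshold policy is admissible in the continuous class (its long-run crawl rate is well-defined and stays within $R$). I would address this by first restricting attention, without loss of generality, to stationary policies via a standard averaging/limit argument (the objective only depends on the stationary law of $\tau^{\textsc{eff}}_i(t)$ under $t\sim\mathrm{unif}[0,T]$), then proving the threshold property for stationary per-page policies where the exchange argument is cleanest, and finally noting that for a stationary threshold policy the renewal structure of $\tau^{\textsc{eff}}_i$ guarantees a well-defined crawl rate as a continuous, increasing function of $\iota_i$, so any target rate $r_i$ is achievable. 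A secondary subtlety is the jump term $\beta_i n^{\textsc{cis}}_i(t)$: a CI signal can push $\tau^{\textsc{eff}}_i$ across the threshold between crawl opportunities, but since the continuous class allows crawls at arbitrary times this causes no difficulty — the policy simply crawls at the instant the threshold is first met or exceeded.
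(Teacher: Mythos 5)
Your proposal takes essentially the same route as the paper: an exchange/local-perturbation argument exploiting that freshness $\exp(-\alpha_i\tau^{\textsc{eff}}_i)$ is monotone in the effective elapsed time, so if an optimal policy ever crawls at a smaller $\tau^{\textsc{eff}}_i$ while refraining at a larger one, moving a crawl (without changing the crawl count) strictly improves the objective, contradicting optimality. The paper implements your "swap" as an $\epsilon$-shift of a single crawl event on realizations violating the threshold rule and verifies the sign of the resulting change in the per-interval integrals, which is exactly the comparison your coupling step would have to carry out; your extra machinery (stationarity reduction, renewal structure) is consistent with, but not needed for, that computation.
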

From now on, let $\pi(\biota)$ denote the thresholded policy which crawls web page $i$ when $\tau^{\textsc{eff}}_{i,\tau} \geq \iota_i$. The class of such thresholded policies contains at least one optimal policy by Lemma \ref{lem: threshold policy}, so we restrict our attention to this class. Also, because of Lemma \ref{lem: threshold policy}, any thresholded policy is \emph{separable} over web pages, which means that the threshold $\iota_i$ determines the performance of the policy $\pi(\biota) $ on web page $i$ regardless what thresholds are picked for the rest of the web pages. Nonetheless, solving the optimization problem \eqref{eq:opt_objective} under bandwidth constraint is still not straightforward, since the length of the crawl intervals are random quantities and so are the crawling frequencies. Motivated by this observation, we now focus only on a single page: the crawl frequency of policy $\pi(\biota)$ for web page $i$ and parameters $\mathcal{E}_i=(\alpha_i,\beta_i,\gamma_i,\tilde{\mu}_i)$ can be computed as
$$
f(\iota_i, \mathcal{E}_i) := \limsup_{T\rightarrow \infty}\frac{1}{T}\E_{\pi(\biota)}\bigl[\left|\mathcal{T}_i \cap[0,T]\right|\bigr]\,.
$$
Note that the crawl frequency for web page $i$ depends only on $\iota_i, \mathcal{E}_i$, and no other parameters.
Also note that the number of crawl events for an interval is a random quantity even for a thresholded policy $\pi(\biota)$, since the sequence $\tau^{\textsc{eff}}_{i,\tau}$ is random due to the presence of CISs.

The following theorem is our main result and characterizes the solution to the optimization problem \eqref{eq:opt_objective}.
\begin{theorem}
\label{thm:main}
There exists $\Lambda\in\mathbb{R}$ such that the optimal threshold $\iota^\star$ for a policy maximizing \eqref{eq:opt_objective} satisfies
\begin{align*}
    &\forall i\in[m]:\, V(\iota^\star_i;\mathcal{E}_i) = \Lambda
    \text{ or } (V(\iota^\star_i;\calE_i) < \Lambda\text{ and }\iota^\star_i=\infty)\\
    &
    \text{and  }\sum_{i=1}^mf(\iota^\star_i,\mathcal{E}_i)=R\,,\\
     &\text{where  }V(\iota;\mathcal{E})=\tilde{\mu}\Bigg(\sum_{i=0}^{\lfloor\frac{\iota}{\beta}\rfloor}\frac{\nu^i}{(\Delta+\nu)^{i+1}}\calR^i_{\exp}((\alpha+\gamma) (\iota-i\beta))-\sum_{i=0}^{\lfloor\frac{t}{\beta}\rfloor}\frac{\exp(-\alpha \iota)}{\gamma}\calR^i_{\exp}(\gamma( \iota-i\beta))\Bigg)\\
    &\qquad\qquad\text{and  }f(\iota,\mathcal{E})=\Bigg(\sum_{i=0}^{\lfloor\frac{t}{\beta}\rfloor}\frac{1}{\gamma}\calR^i_{\exp}(\gamma( \iota-i\beta))\Bigg)^{-1}
\end{align*}
and where $\calR^i_{\exp}$ denotes the normalized residual of the $i$-th Taylor approximation of the exponential function
$\calR^i_{\exp}(x) = \frac{\exp(x)-\sum_{j=0}^i\frac{x^j}{j!}}{\exp(x)}\,.$
\end{theorem}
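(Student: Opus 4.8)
The plan is to use Lemma~\ref{lem: threshold policy} to reduce the infinite-dimensional policy optimization to a finite-dimensional concave program in the per-page thresholds, and then to solve that program by Lagrangian duality. By Lemma~\ref{lem: threshold policy} some optimal policy is thresholded, and every thresholded policy $\pi(\biota)$ is separable over pages, so $O(\pi(\biota))=\sum_{i=1}^m O_i(\iota_i)$ with $O_i(\iota):=\tilde\mu_i\lim_{T\to\infty}\E_{\pi(\biota),\,t\sim\mathrm{unif}([0,T])}\bigl[\exp(-\alpha_i\tau^{\textsc{eff}}_i(t))\bigr]$ (the limit existing for thresholded policies), while the almost-sure total crawl rate of $\pi(\biota)$ equals $\sum_{i=1}^m f(\iota_i,\calE_i)$ by the renewal theorem. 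Hence we must solve $\max_{\biota\in(0,\infty]^m}\sum_i O_i(\iota_i)$ subject to $\sum_i f(\iota_i,\calE_i)\le R$. Reparametrizing page $i$ by its crawl frequency $\phi=f(\iota,\calE_i)$ — the map $\iota\mapsto f(\iota,\calE_i)$ being continuous and strictly decreasing from $(0,\infty]$ onto $[0,\infty)$, sending $\iota=\infty$ to $\phi=0$ — and letting $g_i(\phi):=O_i(\iota_i(\phi))$ where $\iota_i(\cdot)$ is the inverse map, the program becomes $\max\sum_i g_i(\phi_i)$ over $\phi_i\ge0$ with $\sum_i\phi_i\le R$.

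Next I derive closed forms. Fix a page $i$. Between two consecutive crawls of page $i$, the effective elapsed time is the process $s\mapsto s+\beta_i N(s)$, where $N$ is a rate-$\gamma_i$ Poisson process counting the CI signals received since the last crawl, and the next crawl occurs at the first time $\tau$ this process reaches $\iota_i$; the crawl times are regeneration points. Using $\prob[E_i(t)\,|\,\calH_t]=\exp(-\alpha_i\tau^{\textsc{eff}}_i(t))$ (Section~\ref{sec:setup}) and the renewal--reward theorem,
\[ O_i(\iota)=\tilde\mu_i\,\frac{A_i(\iota)}{B_i(\iota)},\qquad f(\iota,\calE_i)=\frac{1}{B_i(\iota)},\qquad A_i(\iota):=\E\Bigl[\int_0^{\tau} e^{-\alpha_i(s+\beta_i N(s))}\,ds\Bigr],\quad B_i(\iota):=\E[\tau]. \]
Conditioning on $N$, and using $e^{-\alpha_i\beta_i}=\nu_i/\gamma_i$, the elementary integral $\int_0^a s^k e^{-cs}\,ds=\frac{k!}{c^{k+1}}\calR^k_{\exp}(ca)$, and the reading $\calR^k_{\exp}(x)=\prob[\mathrm{Poisson}(x)>k]$, a direct calculation gives
\[ B_i(\iota)=\sum_{k=0}^{\lfloor\iota/\beta_i\rfloor}\frac{1}{\gamma_i}\,\calR^k_{\exp}\!\bigl(\gamma_i(\iota-k\beta_i)\bigr),\qquad A_i(\iota)=\sum_{k=0}^{\lfloor\iota/\beta_i\rfloor}\frac{\nu_i^{\,k}}{(\alpha_i+\gamma_i)^{k+1}}\,\calR^k_{\exp}\!\bigl((\alpha_i+\gamma_i)(\iota-k\beta_i)\bigr). \]
The first matches $1/f(\iota,\calE_i)$, and, since $\alpha_i+\gamma_i=\Delta_i+\nu_i$, the claimed $V(\iota;\calE_i)$ is precisely $\tilde\mu_i\bigl(A_i(\iota)-e^{-\alpha_i\iota}B_i(\iota)\bigr)$: its first sum is $\tilde\mu_i A_i(\iota)$ and its second is $\tilde\mu_i e^{-\alpha_i\iota}B_i(\iota)$.

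The heart of the proof is the concavity of each $g_i$. The functions $A_i,B_i$ are continuous and, on every interval between consecutive multiples of $\beta_i$, real-analytic; differentiating termwise there with $\frac{d}{dx}\calR^k_{\exp}(x)=e^{-x}x^k/k!$ and invoking $e^{-\alpha_i\beta_i}=\nu_i/\gamma_i$ once more, the $k$-th summand of $A_i'(\iota)$ is found to equal the $k$-th summand of $e^{-\alpha_i\iota}B_i'(\iota)$, whence $A_i'(\iota)=e^{-\alpha_i\iota}B_i'(\iota)$ on each piece and, by continuity, throughout. Therefore $V(\iota;\calE_i)=O_i'(\iota)/f'(\iota,\calE_i)$ is the marginal objective per unit bandwidth, and a one-line computation gives $V'(\iota)=\tilde\mu_i\alpha_i e^{-\alpha_i\iota}B_i(\iota)\ge0$; since $\iota_i(\cdot)$ is decreasing, $g_i'(\phi)=V(\iota_i(\phi);\calE_i)$ is non-increasing, so $g_i$ is concave (strictly if $\alpha_i>0$) and increasing. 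Then $\max\sum_i g_i(\phi_i)$ over $\{\phi\ge0,\ \sum_i\phi_i\le R\}$ is a concave maximization satisfying Slater's condition, so the KKT conditions are necessary and sufficient: there exists $\Lambda$ with $g_i'(\phi_i^\star)=\Lambda$ whenever $\phi_i^\star>0$ and $g_i'(0)\le\Lambda$ whenever $\phi_i^\star=0$, and the budget binds, $\sum_i\phi_i^\star=R$, since every $g_i$ is strictly increasing. Existence of such a $\Lambda$ follows from the intermediate value theorem applied to the continuous non-increasing map $\Lambda\mapsto\sum_i\argmax_{\phi\ge0}\bigl(g_i(\phi)-\Lambda\phi\bigr)$, which vanishes for large $\Lambda$ and diverges as $\Lambda\downarrow0$. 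Undoing the reparametrization ($\phi_i^\star=0\iff\iota_i^\star=\infty$, and $g_i'(\phi)=V(\iota_i(\phi);\calE_i)$) turns these conditions into exactly the stated dichotomy together with $\sum_i f(\iota_i^\star,\calE_i)=R$.

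The step I expect to be most delicate is establishing the concavity above: one must execute the termwise identity $A_i'(\iota)=e^{-\alpha_i\iota}B_i'(\iota)$ carefully — it relies precisely on the choice $\beta_i=-\log(\nu_i/\gamma_i)/\alpha_i$ — verify that the piecewise formulas are $C^1$ across the breakpoints $\iota\in\{k\beta_i:k\in\mathbb{N}\}$ (new summands enter continuously because $\calR^k_{\exp}(0)=0$), and handle separately the degenerate regimes ($\gamma_i=0$, no signals, recovering the classical setting of \citet{Azar8099}; $\nu_i=0$, perfect precision, so $\beta_i=\infty$; $\alpha_i=0$, every change signalled), in which the formulas and the summation ranges $\lfloor\iota/\beta_i\rfloor$ must be interpreted as appropriate limits.
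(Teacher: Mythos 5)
Your proposal is correct and follows the same overall skeleton as the paper's proof -- decompose the objective over pages via the threshold policies of Lemma~\ref{lem: threshold policy}, reparametrize each page by its crawl frequency $\xi_i=f(\iota_i;\calE_i)$, apply Lagrange/KKT conditions to $\max\sum_i o(f^{-1}(\xi_i;\calE_i);\calE_i)$ subject to $\sum_i\xi_i\le R$, and identify $V$ as the marginal objective per unit bandwidth, $V(\iota;\calE)=\tilde\mu\,(w(\iota;\calE)-e^{-\alpha\iota}\psi(\iota;\calE))$ with your $A,B$ playing the roles of the paper's $w,\psi$. Where you genuinely diverge is in how the closed forms are obtained: the paper proves the formula for $\psi$ by induction over the breakpoints $k\beta$ (Lemma~\ref{lem: function values}) and then recovers $w$ \emph{indirectly}, by verifying that the claimed expression has the derivative ratio $w'=e^{-\alpha\iota}\psi'$ demanded by the Leibniz-rule Lemma~\ref{lem: derivative ratio}; you instead compute both expectations directly via renewal--reward, writing $\E[\tau]=\int_0^\iota\prob[s+\beta N(s)<\iota]\,ds$, conditioning on the Poisson count $N(s)$, and using $\int_0^a s^k e^{-cs}\,ds=\tfrac{k!}{c^{k+1}}\calR^k_{\exp}(ca)$ together with $e^{-\alpha\beta}=\nu/\gamma$ and $\alpha+\gamma=\Delta+\nu$ -- a more self-contained derivation that recovers the derivative identity termwise afterwards rather than assuming it. You also go further than the paper on the optimization side: the paper states the KKT conditions as necessary (remarking only that concavity of $o\circ f^{-1}$ would give sufficiency), whereas you actually establish concavity of $g_i$ from the monotonicity of $V$ (the content of Lemma~\ref{lem: monotonuous}) plus the monotone reparametrization, invoke Slater, and argue that the budget binds and that $\Lambda$ exists via an intermediate-value argument; this upgrades the theorem's conditions to a full characterization of the optimum. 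The only soft spots are minor and correctly flagged by you: $C^1$-gluing across the breakpoints, the degenerate regimes ($\gamma=0$, $\nu=0$, $\alpha=0$), and the possible set-valuedness of $\argmax_\phi(g_i(\phi)-\Lambda\phi)$ when some $g_i$ is not strictly concave; none of these affects the validity of the argument.
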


While this does not allow for a simple analytical solution, we can approximate the solution via the following Lemma.
\begin{lemma}
\label{lem: monotonuous}
The functions $V$ and $f$ are monotonous in their first argument for any $\calE$.
\end{lemma}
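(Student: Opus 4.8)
My plan is to differentiate $V$ and $f$ directly on each maximal open interval $(k\beta,(k+1)\beta)$ on which the floor function $\lfloor\iota/\beta\rfloor$ is constant, and then glue the pieces by continuity. First I would record two elementary facts about $\calR^i_{\exp}$. Writing $p_i(x):=\sum_{j=0}^i x^j/j!$, so that $\calR^i_{\exp}(x)=1-e^{-x}p_i(x)$, the relation $p_i'=p_{i-1}$ (with $p_{-1}\equiv 0$) gives
\[
\tfrac{d}{dx}\calR^i_{\exp}(x)=e^{-x}\bigl(p_i(x)-p_{i-1}(x)\bigr)=\tfrac{x^i e^{-x}}{i!},
\]
which is continuous, nonnegative, and strictly positive for $x>0$; moreover $\calR^i_{\exp}(0)=0$. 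Hence at every breakpoint $\iota=k\beta$ the newly appearing summand (index $i=k$) in each of the three sums of Theorem~\ref{thm:main} is evaluated at argument $0$ and vanishes, so $V(\cdot;\calE)$ and $f(\cdot;\calE)$ extend continuously to all of $(0,\infty)$; it then suffices to establish strict monotonicity on each interval $(k\beta,(k+1)\beta)$.

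For $f$, I would write $f(\iota;\calE)=1/g(\iota)$ with $g(\iota)=\sum_{i=0}^{\lfloor\iota/\beta\rfloor}\tfrac1\gamma\calR^i_{\exp}(\gamma(\iota-i\beta))$. On an interval where $K:=\lfloor\iota/\beta\rfloor$ is fixed, each argument $\gamma(\iota-i\beta)$ with $0\le i\le K$ is strictly positive, so $g$ is a finite sum of strictly increasing functions and hence strictly increasing; since $g$ is globally continuous and $g>0$ on $(0,\infty)$, $f$ is strictly decreasing there.

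For $V$, I would differentiate $V(\iota;\calE)=\tilde\mu\bigl(S_1(\iota)-S_2(\iota)\bigr)$ on an interval with $K$ fixed, where $S_1$ denotes the first sum (with its coefficients $\nu^i/(\Delta+\nu)^{i+1}$) and $S_2=\tfrac{e^{-\alpha\iota}}{\gamma}\sum_{i=0}^K\calR^i_{\exp}(\gamma(\iota-i\beta))$ the second. The crux is a term-by-term cancellation. Using the definition of $\beta$ in the form $\nu=\gamma e^{-\alpha\beta}$ together with the identity $\Delta+\nu=\alpha+\gamma$ (immediate from $\gamma=\lambda\Delta+\nu$ and $\alpha=(1-\lambda)\Delta$), I would simplify the $i$-th term of $S_1'$ to $\tfrac{\gamma^i(\iota-i\beta)^i}{i!}\,e^{-(\alpha+\gamma)\iota+\gamma i\beta}$, which is exactly the $i$-th contribution to $S_2'$ arising when the derivative falls on $\calR^i_{\exp}(\gamma(\iota-i\beta))$ rather than on $e^{-\alpha\iota}$. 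These cancel, leaving only the derivative of $e^{-\alpha\iota}$:
\[
V'(\iota;\calE)=\tilde\mu\,\frac{\alpha e^{-\alpha\iota}}{\gamma}\sum_{i=0}^{K}\calR^i_{\exp}\bigl(\gamma(\iota-i\beta)\bigr)=\frac{\alpha\,\tilde\mu\,e^{-\alpha\iota}}{f(\iota;\calE)}\ \ge\ 0,
\]
with strict inequality for $\iota>0$ since the $i=0$ summand equals $1-e^{-\gamma\iota}>0$. Combined with continuity, this shows $V$ is strictly increasing on $(0,\infty)$, which together with the statement for $f$ proves the lemma.

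The one delicate point I anticipate is precisely this cancellation inside $V'$: it requires carefully tracking the three exponential factors $\nu^i=\gamma^i e^{-\alpha i\beta}$, $e^{-(\alpha+\gamma)(\iota-i\beta)}$ and $e^{-\alpha\iota}e^{-\gamma(\iota-i\beta)}$, and checking that the defining relation $\alpha\beta=\log(\gamma/\nu)$ makes the two per-term expressions coincide exactly; everything else is routine. The degenerate case $\nu=0$ (so $\beta=\infty$) collapses every sum to its $i=0$ term and is covered by the same computation.
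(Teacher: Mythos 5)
Your proof is correct, but it takes a different route from the paper's. The paper proves monotonicity structurally: it writes $V(\iota;\calE)=\tilde\mu\bigl(w(\iota;\calE)-e^{-\alpha\iota}\psi(\iota;\calE)\bigr)$ and invokes the derivative-ratio identity $w'(x;\calE)=e^{-\alpha x}\psi'(x;\calE)$ (Lemma~\ref{lem: derivative ratio}, proved abstractly via the Leibniz rule on the defining expectations), so that $V'(\iota)=\tilde\mu\,\alpha e^{-\alpha\iota}\psi(\iota)>0$ drops out in one line; monotonicity of $f=1/\psi$ then follows from $\tfrac{d}{dx}\calR^i_{\exp}(x)=\tfrac{x^i e^{-x}}{i!}>0$. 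You instead differentiate the explicit sums of Theorem~\ref{thm:main} term by term on each interval where $\lfloor\iota/\beta\rfloor$ is constant and verify the cancellation by hand using $\nu=\gamma e^{-\alpha\beta}$ and $\Delta+\nu=\alpha+\gamma$; this is essentially the same algebra the paper carries out in the proof of Lemma~\ref{lem: function values} to certify the closed form of $w$, so your argument in effect re-derives the derivative-ratio identity at the level of the formulas rather than citing it. What your version buys is self-containedness (it needs only the stated formulas for $V$ and $f$, not the probabilistic definitions of $w$ and $\psi$) and explicit care at the breakpoints $\iota=k\beta$, where you correctly note the newly appearing summand vanishes so the piecewise argument glues by continuity — a point the paper's one-line proof silently subsumes by working with $w$ and $\psi$ directly. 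What the paper's route buys is brevity and robustness: once Lemma~\ref{lem: derivative ratio} is available, the sign of $V'$ is immediate without tracking the exponential bookkeeping you flag as the delicate step. Your final expressions $V'(\iota)=\alpha\tilde\mu e^{-\alpha\iota}/f(\iota;\calE)$ and $f$ strictly decreasing agree with the paper's, and your handling of the degenerate case $\nu=0$, $\beta=\infty$ is fine.
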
 
Lemma \ref{lem: monotonuous} suggest that, given oracle access to $V$ and $f$, for any $\Lambda'$ we can find $\iota'_i$ such that $V(\iota'_i;\calE_i)\approx \Lambda'$ via line-search and compute its frequency $f(\iota'_i;\calE_i)$. We run an outer line search over $\Lambda'$ to find a value of $\Lambda'$ such that $\sum_{i=1}^m f(\iota'_i;\calE_i)\approx R$.

\section{A scalable discrete policy}
\label{sec:policy_discrete}
Optimal continuous policies are in general undesirable in practice, since the actual bandwidth constraint forbids spikes of crawl events over any time interval, not only asymptotically. This cannot be controlled in the optimal solution of the continuous policy class.
This is why, from an infrastructure point of view, it is desirable to execute a policy from the discrete class.

\citet{Azar8099} proposed a transition from their continuous policy to a discrete one via low-discrepancy sampling, a procedure that crucially relies on the fact that the crawl events for each web page under the optimal continuous policy are evenly spaced. Unfortunately, this approach is not practical in a real world crawler for the following two reasons. On the one hand, solving the joint optimization problem, for example the one defined in \eqref{eq:cd_optimization} in Appendix~\ref{sec:proofs}, is computationally demanding when trillions of pages are in the system. In addition to this, change and request rates are constantly being updated and new pages are added to the pool which requires to again solve the optimization problem to get an update for the crawling policy. On the other hand,
in the presence of CI signals, the crawl intervals of the optimal continuous policy are random and depend on the number of CI signals received in any interval. It is unclear how to generalize low-discrepancy sampling to this scenario.

Next we present a more general discretization strategy based on a method presented in \emph{The Unofficial Google Data Science Blog} \citep{google_blogpost} for the case without the change-indicating signals. Below we extend it to include CISs, as well.

\paragraph{Discrete policy via bandwidth control.}
We propose the following general procedure to derive a discrete policy. The continuous solution induces for any bandwidth $R$ a policy $\pi^R$ that asymptotically satisfies the bandwidth constraint. Instead of running the algorithm with a fixed $R$, we are controlling the bandwidth over time in such a way that the crawl satisfies a discrete policy. At time $t=\frac{j-1}{R}$, pick $R_j$ such that the first crawl of policy $\pi^{R_j}(\mathcal{H}_t)$ happens exactly at time $\frac{j}{R}$.
While this sounds like a computationally demanding reduction, this is in fact easier to solve than the continuous problem.
We obtain this policy by crawling any $i_t\in \argmax_{i\in\{1,\dots,m\}} V(\tau^{\textsc{eff}}_{i}(t);\calE_i)$ at any time step
(we call $V(\tau^{\textsc{eff}}_{i}(t);\calE_i)$ the crawl value for each page). This immediately follows from Theorem~\ref{thm:main} by setting the Lagrange multiplier $\Lambda$ to the maximum crawl value.

The resulting crawling strategy is given in Algorithm~\ref{alg: crawl}: 

\begin{algorithm}
    \caption{Effective crawling with CI signals}\label{alg: crawl}
    \begin{algorithmic}
\REQUIRE  $\forall i:\calE_i=(\alpha_i,\beta_i,\gamma_i,\tilde\mu_i)$, $R$ 
\FOR{ $t = \frac{1}{R},\frac{2}{R},\dots$}
    \STATE Pick Web page $i_t$ to crawl where 
    \STATE $\qquad i_t \gets \argmax_{i\in\{1,\dots,m\}}V(\tau^{\textsc{eff}}_{i}(t);\calE_i)$
    \STATE (Instances of $V$ are defined in App.~\ref{app: value functions}.)
    \STATE Crawl page $i_t$ at time $t$.
\ENDFOR
\end{algorithmic}
\end{algorithm}

Our empirical evaluations in Section~\ref{sec:experiments} show that the algorithm yields a small, albeit significant, improvement in performance compared to the algorithm of \cite{Azar8099}.

\subsection{Special cases of the crawl value function}
\label{sec: value functions}

We provide explicit formulas of the crawl value function for different settings.\newline
{\bf No change-indicating signals:} If there are no change-indicating signals available, then this reduces to the vanilla crawl problem studied by \citet{ChGa03}. This implies $\eff{t}=\tau^{\textsc{elap}}(t)$, $\alpha=\Delta$ and the value function reduces to
$
V_{\textsc{Greedy}}(\iota;\calE) = \frac{\tilde\mu}{\Delta}\calR^1_{\exp}(\Delta \iota)
$.

\noindent{\bf Change-indicating signals, no false positives:} Without false positive signals, whenever a signal is provided for a page it implies that the content of a page as outdated. 
    This corresponds to $\beta=\infty$ and $\eff{t}=\tau^{\textsc{elap}}(t)$ if no CI signal has been received and $\eff{t}=\infty$ otherwise.
The value function becomes in the limit of these parameters
\[
V_{\textsc{Greedy\_CIS}}(\iota;\calE) = \begin{cases}
\frac{\tilde\mu}{\Delta}\text{ if }\iota=\infty \text{ or otherwise}\\
\tilde\mu\left(\frac{\calR^0_{\exp}((\alpha+\gamma)\iota)}{\alpha+\gamma}-\frac{\calR^0_{\exp}(\gamma \iota)}{\gamma \exp(\alpha \iota)}\right)\,.\\
\end{cases}
\]
Notice that $\gamma\rightarrow 0$ recovers the value function of without change-indicating signals. \newline
{\bf Noisy change-indicating signals:} In the general case, we obtain the general value function discussed in section~\ref{sec:continuous}.
    \begin{align*}
        V_{\textsc{Greedy\_NCIS}}(\iota,\calE) = \tilde\mu\sum_{i=0}^{\lfloor\frac{\iota}{\beta}\rfloor}&\Big( \frac{\nu^i}{(\Delta+\nu)^{i+1}}\calR^i_{\exp}((\alpha+\gamma) (\iota-i\beta))\\
        &\qquad-\frac{\exp(-\alpha \iota)}{\gamma}\calR^i_{\exp}(\gamma( \iota-i\beta))\Big)\,.
    \end{align*}

\noindent{\bf Approximations of \Algo{Greedy\_NCIS}:} Since the value function in the general case is computationally demanding when $\lfloor\frac{t}{\beta}\rfloor$ is large, we also consider an approximation where we set any higher order residuals to $0$ in the computation of the crawl value. We denote the $j$-level approximation of the general value function, summing the first $j$ terms only, by $V_{\textsc{G\_NCIS-Approx-j}}$ (the exact formula is presented in Appendix~\ref{app: value functions}).

\subsection{Scalability} 

We note that Algorithm~\ref{alg: crawl} is fully decentralized except for the $\argmax$ operation. This means that as long as the crawl value can be efficiently computed (as discussed in the previous subsection), the method can be scaled very efficiently.

First, it is trivial to incorporate change of parameters and addition or removal of URLs in a fully decentralized manner.
Furthermore, 
to find the page with the largest value, that is, 
$$\argmax_{i \in \{1, \dots, m\}} V(\tau^{\textsc{eff}}_{i}(t);\calE_i)$$
to be crawled at each time step, only the comparison between the pages with the top crawl values matters, and hence the crawl values of most pages do not need to be computed at every time step. 
We can estimate the crawl value threshold where a page is likely to be selected to be crawled by keeping track of the crawl values of the selected pages over time, and estimate the next time when the crawl value of a page needs to be recomputed.

Other distributed computation technique can also be applied to scale up the computation in practice. For example, we can shard the web pages into $N$ shards and assign $1/N$ bandwidth to each shard, and schedule the web pages in each shard independently in parallel.

\section{Experiments}
\label{sec:experiments}

The goal of our experiments is to support the following claims: (1) Our discrete policy indeed constantly achieves a performance that is close to the optimal continuous one. (2) The CI signals can be utilized by the proposed methods even if the CI signals are partially observable, come with false positives and are delayed. All results which we report in this study  are averaged over 100 repetitions which also allows us to compute standard error for all reported quantities.

\subsection{Problem instances}
\label{sec:problem_inst}
A crawling problem when there is no CI signals is fully determined by the change and request rates; we generate these from a uniform distributions (with parameters specified later).
The CI signals in our model has three parameters for each page:\\
\newline
    {\bf Partially observability parameter ($\lambda_i$):} some part of the change process is not observable in which case the policy does not get CI signals about the change event. The fraction of the change events of page $i$ which is observable by the policy is denoted by $\lambda_i$ and this parameter 
    is generated from a Beta distribution whose parameters are denoted by $\lambda_a$ and $\lambda_b$.\newline
    {\bf False positive rate ($v_i$):} the CI signals might contain false positive events in which case not all CIS do correspond to some change event. In our model we assume that the false positive events are also generated according to a Poisson process. The rate with which the false positive events are generated is denoted by $v_i$ for page $i$ and this parameter is generated from a uniform distribution over $[v_{\min}, v_{\max}]$.

\subsection{Policies}

For each problem instance and bandwidth $R$, the optimal policy can be computed by solving \eqref{eq:cd_optimization} which corresponds to the optimal continuous policy, and Algorithm~\ref{alg: crawl} provides our proposed approximation in the space of discrete policies. We consider the performance of the optimal continuous policy as the baseline, and shall refer to this method as \Algo{Baseline}. The performance of a policy is its accuracy: fraction of events when there is a fresh copy upon request. Note that the accuracy of a continuous policy with rates $(\xi_1, \dots, \xi_m)$ can be computed as $\tfrac{1}{\sum_{i=1}^m \mu_i }\sum_{i=1}^m G(\xi_i; \mu_i, \Delta_i ).$

Our comparison study includes a few policies based on Algorithm~\ref{alg: crawl} using the special cases of value functions presented in Section~\ref{sec: value functions}.
\Algo{Greedy} uses the value function without knowledge of change-indicating signals.  \Algo{Greedy-CIS} operates under the (possibly false) assumption that change-indicating signals are noiseless. \Algo{Greedy-NCIS} using the exact value function in the general case and  \Algo{G-NCIS-APPROX-1} and \Algo{G-NCIS-APPROX-2} are the one or two step approximations of that function. 

\subsection{Accuracy of a policy}

Each algorithm was run with a bandwidth $R=100$ and for $t\le 1000$, thus each policy schedule $100,000$ crawl events in a single run. We are varying the number of pages $m$ in our experiments to control the hardness of the problem instance at hand. Note that the change parameters $\Delta_i$ as well as the request parameters are drawn from uniform distribution over $[0,1]$ thus $\sum_{i=1}^m \Delta_i$ and $\sum_{i=1}^m \mu_i$ are close $m/2$ in expectation. Therefore the expected change and request events given that $t\le 1000$ are around $(m\cdot 1000)/2$ in a single run with $m$ web pages. We compute the accuracy of a crawling policy over these $(m\cdot 1000)/2$ events, i.e. fraction of request events when fresh copy is available. Note that the larger the number of pages, the lower accuracy is that is achievable by any policy. We will always report the accuracy of the \Algo{Baseline} method which corresponds to the optimal continuous policy and can be computed by solving \eqref{eq:cd_optimization}.

\subsection{Continuous vs. discrete policies}
\label{sec:exp_disc}

Discrete policies have several advantages comparing to continuous ones in a production system. \citep{ChGa03} already came up with several simple approaches for converting a continuous policy into a discrete one. In a recent paper, Azar et. al.~\cite{Azar8099} made use of the technique of low discrepancy sequences~\cite{LiLa73} which is a scheduling technique for discrete systems so as the empirical rates of each scheduled event has to be close to a predefined rate. Their approach consists of solving the continuous problem, that is given in \eqref{eq:cd_optimization}, to obtain optimal rates for the continuous case, and then applying a low discrepancy sequence algorithm to turn this continuous policy into a discrete one. We implement Algorithm 3 of~\cite{Azar8099} in experiments for comparison, and refer to this approach as \Algo{LDS}.

The \Algo{GREEDY} approach conceptually also converts a continuous policy into a discrete one like \Algo{LSD} algorithm; however, there is no need to solve the continuous problem, but it computes the value of function $V_{\textsc{Greedy}}$ for each page which only depends on the elapsed times since the last crawl, and on the change and request rates. Therefore, \Algo{GREEDY} does not require to solve a large constrained optimization before its deployment. 

In the first set of experiments, we will compare the performance of \Algo{GREEDY} and \Algo{LDS}. The results are shown in Figure~\ref{fig:exp_1}.
Based on the results, we found that both policies have very similar performance regardless of the number of pages and, in addition to this, the performance of both algorithms is on par with the performance of the baseline which is the optimal policy in the continuous class. We also compared the empirical crawling frequency of \Algo{GREEDY} and \Algo{LDS} to the frequency of the optimal policy, and we found that they are close to each other. This analysis is deferred to Appendix~\ref{app:emp_rates_comp}.

\begin{figure}
  \centering
  \includegraphics[width=1.0\linewidth]{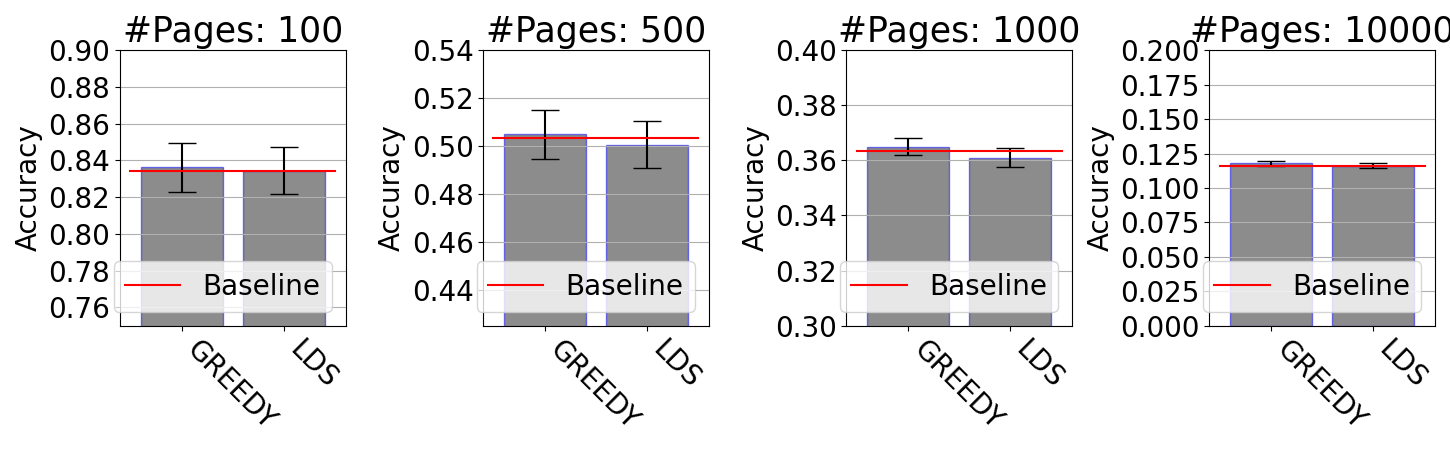}
  \caption{Accuracy of discrete policies without using change-indicating signal. The \Algo{LDS} corresponds to the Algorithm 3 in \cite{Azar8099} where the input rates are coming from the solution of \eqref{eq:cd_optimization} with the true change and request rates. \label{fig:exp_1}}
\end{figure}

\subsection{Partially observable change sequences}

In the second set of experiments, we assess the utility of CI signals when they are partially observable. We generate problem instance as in Section~\ref{sec:exp_disc}, but this time we compare the performance of \Algo{GREEDY-CIS} to the performance of \Algo{GREEDY}, since \Algo{GREEDY-CIS} is supposed to be amenable to utilize CI signal in an efficient way when there is no false positive CI signal. 

The parameter $\lambda_i$, which controls the fraction of observable change events according to our model, is generated from $\text{Beta} ( 0.25, 0.25)$ which has a bi-modal density function. Thus, the change events are not revealed too much via CI signals for some pages, and they are revealed almost every time for some other pages. The results are presented in Figure \ref{fig:exp_2_beta}. The results clearly show that CI signals can significantly improve the performance of the crawling policy regardless to the relation of bandwidth $(R=100)$ and sum of change parameters $\sum_{i=1}^{m} \Delta_i$.

\begin{figure}
  \centering
  \includegraphics[width=1.0\linewidth]{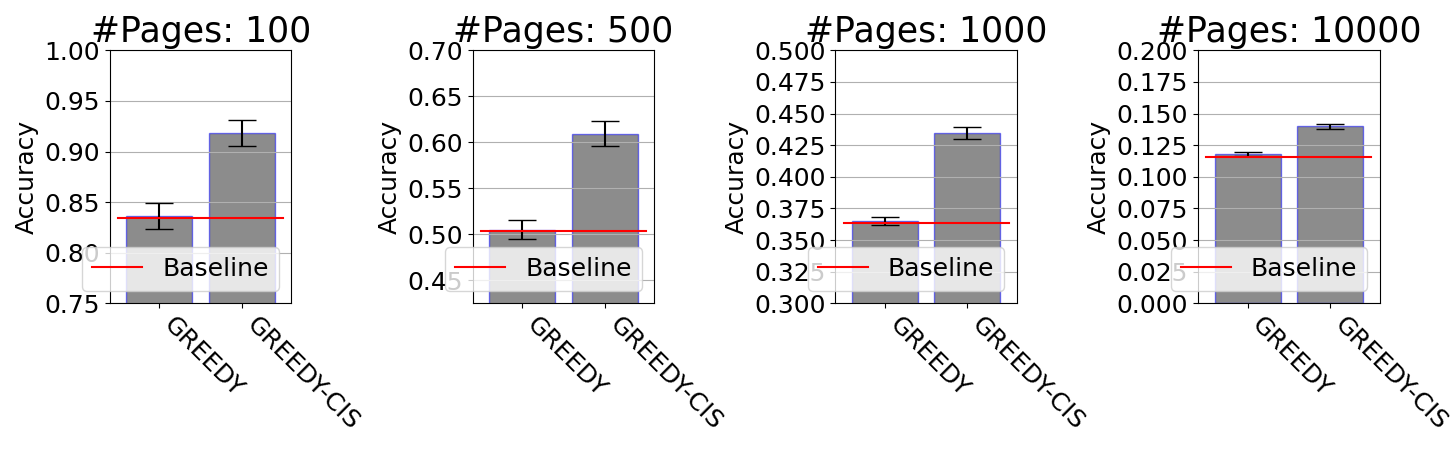}
  \caption{Accuracy of \Algo{GREEDY} and \Algo{GREEDY-CIS} with change-indicating signal. \label{fig:exp_2_beta}}
\end{figure}

\label{sec:emp_rates}
\begin{figure}[h]
\centering
\includegraphics[width=0.39\linewidth]{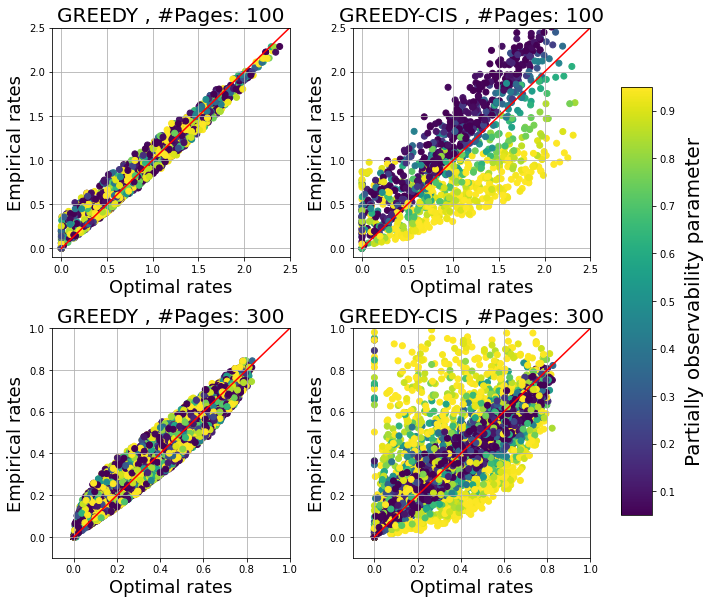}
\qquad\qquad
\includegraphics[width=0.39\linewidth]{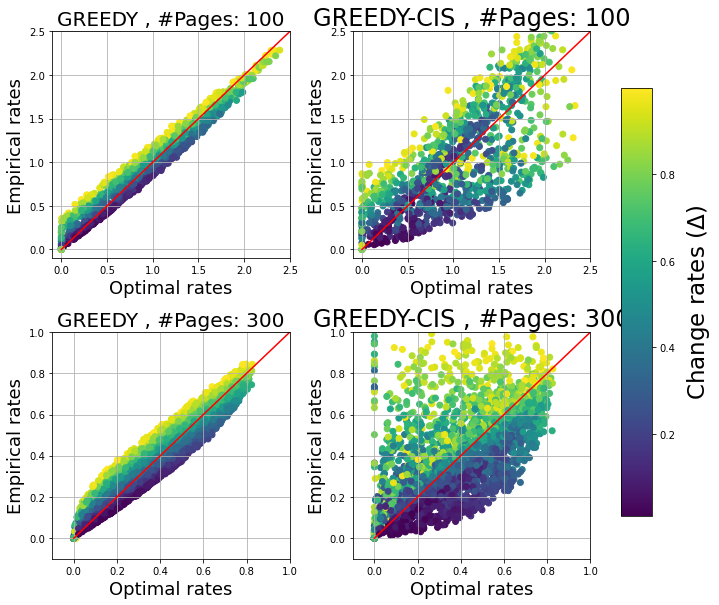}
  \caption{Empirical crawl rates for various web pages achieved by discrete policies \Algo{GREEDY} and \Algo{GREEDY-CIS}. Each dot corresponds to a web page for which we computed the rate of the \Algo{Baseline} method and plotted it versus the empirical rate achieved by the corresponding policy. The color of the dots on the indicates (i) the observability of the change sequence that is controlled by $\lambda$ on the left panel, while (ii) it indicates the change rates of the web pages on the right panel.  \label{fig:exp_2_beta_rates}}
\end{figure}

Next, we investigate where this performance boost of \Algo{GREEDY-CIS} comes from. To do so, we plotted the empirical crawl rates achieved by \Algo{GREEDY} and \Algo{GREED-CIS} which are shown in Figure~\ref{fig:exp_2_beta_rates} (left). Each dot corresponds to a web page for which we compute the crawl rate of the \Algo{Baseline} method versus the empirical crawl rates of various methods. The color of the dots indicates the value of parameters $\lambda_i$
which represents the fraction of change events for which CI signals are available. The larger the parameter $\lambda$, the more CI signals are provided for a web page. Figure~\ref{fig:exp_2_beta_rates} (right) shows the very same crawl rates but the color of the dots indicate the change parameter $\Delta_i$ of the web page. Based on the results with 100 web pages that are shown in the top subplots in both panels of Figure~\ref{fig:exp_2_beta_rates}, one can see that \Algo{GREEDY-CIS} decreases the crawl rate for web pages with many CI signals, and allocates more bandwidth for web pages with no or few CI signal. Interestingly, the results for 300 web pages reveal a different behaviour (see the bottom subplots of Figure~\ref{fig:exp_2_beta_rates}). In this case, some of the web pages with many CISs get boosted and some of them get decreased. Typically, those web pages that have high change rates get a higher crawl rate compared to the rate of \Algo{Baseline} (see bottom right subplot of the right panel in Figure~\ref{fig:exp_2_beta_rates}). In general, the crawl rates of those web pages which have no or few CISs remained close to the optimal rate of the \Algo{Baseline} method which is the optimal behaviour when no CI signals are provided (see the blue dots around the diagonal of right subplots of the left panel of Figure~\ref{fig:exp_2_beta_rates}). 

\vspace{-0.1cm}
\subsection{Presence of false positive CIS}
\vspace{-0.1cm}
In this set of experiments, we compare the performance of various policies when false positive are also present among the change indicating signals, and the policy is aware of the rate of the false positive events. 
In the first experiment, we run all policies, including \Algo{GREEDY}, \Algo{GREEDY-CIS}, \Algo{GREEDY-NCIS},  \Algo{G-NCIS-APPROX-1} and \Algo{G-NCIS-APPROX-2}, with $m\in \{ 100, 200, 500, 750, 1000, 10000\}$ and with a constant bandwidth $R=100$. Thus, the larger the number of web pages, the less bandwidth can be allocated per web page. The observability parameters ($\lambda_i$) were generated from $\text{Beta} (0.25, 0.25)$, and the parameters that determines the rate of false positive CISs is generated from $\text{Unif} (0.1, 0.6)$. The results are presented in Figure~\ref{fig:exp_all}.
\begin{figure}[h!]
  \centering
  \includegraphics[width=0.9\linewidth]{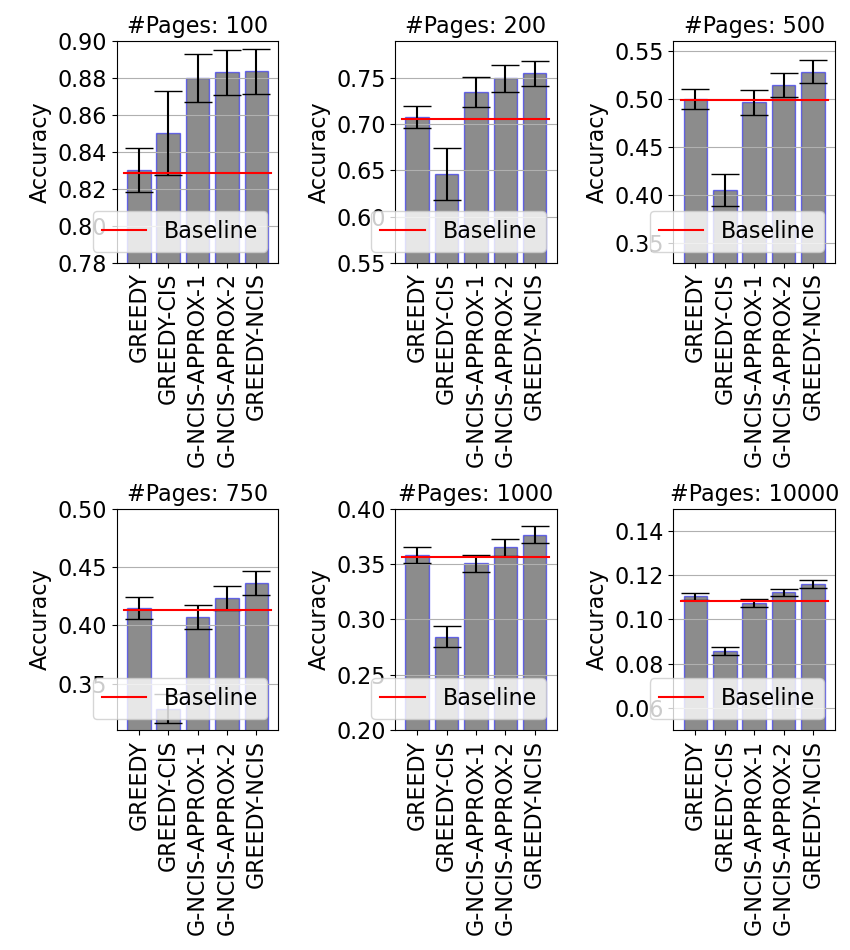}
  \caption{Accuracy for policies with CIS that is partially observable in the presence of false positives signals.}
  \label{fig:exp_all}
\end{figure}

\begin{figure}[t]
  \centering
  \includegraphics[width=0.6\linewidth]{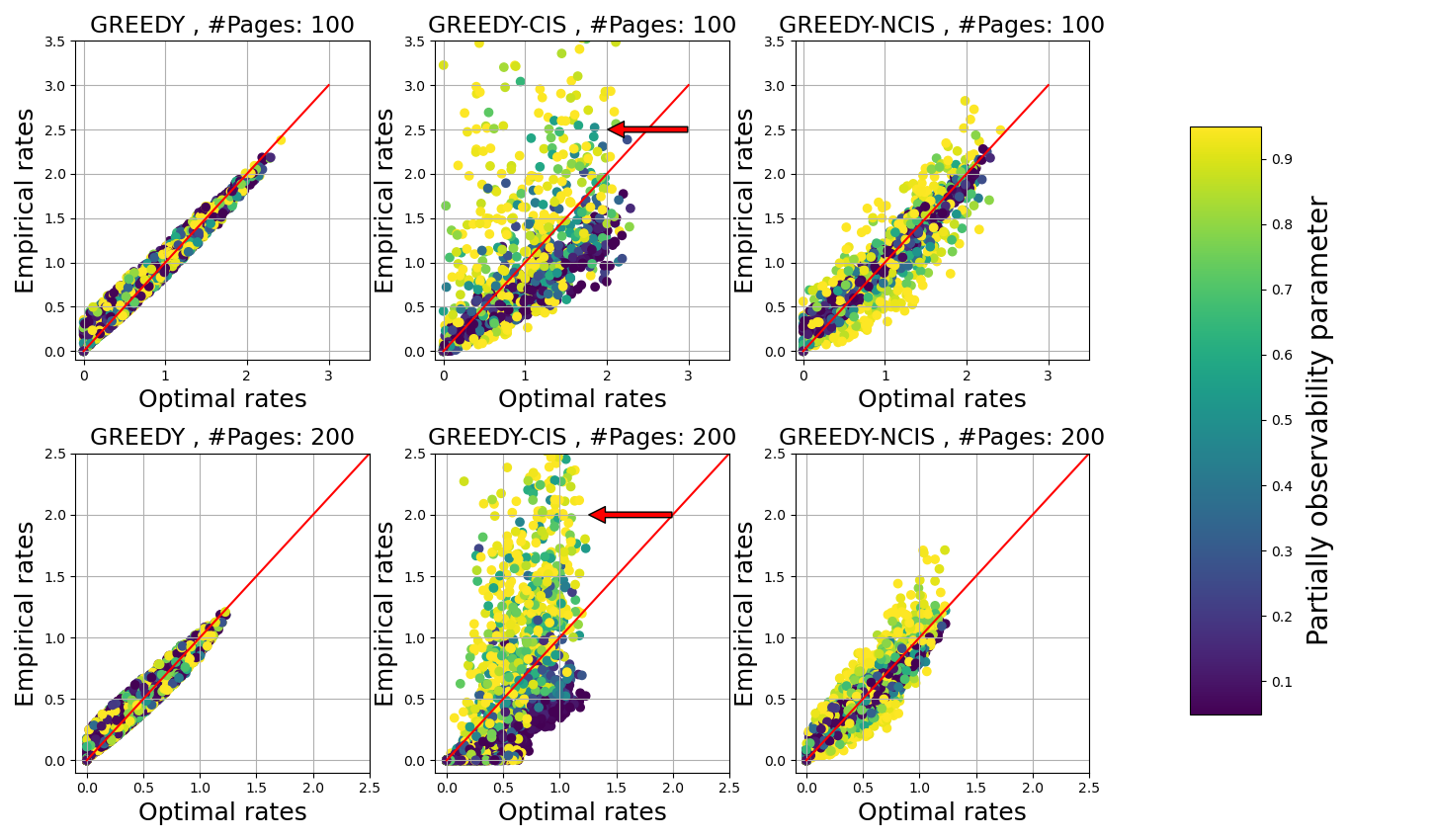}
  \caption{Empirical crawl rates in the presence of false positive CISs for various web pages achieved by discrete policies \Algo{GREEDY}, \Algo{GREEDY-CIS} and \Algo{GREEDY-NCIS}. Each dot corresponds to a web page for which we computed the crawl rate of the \Algo{Baseline} method and plotted it against the empirical crawl rate achieved by the corresponding policy.}
  \label{fig:exp_all_rates}
\end{figure}
The results reveal several trends which we summarize as follows. The performance of \Algo{GREEDY-NCIS}, \Algo{G-NCIS-APPROX-1} and \Algo{G-NCIS-APPROX-2} is superior to \Algo{GREEDY} and \Algo{GREEDY-CIS} in almost every case, so they can utilize CIS with false positives. When the band width is not tight, i.e.  $m \in \{ 100, 200, 500\}$ the performance of the algorithms with approximated value function, i.e. \Algo{G-NCIS-APPROX-1} and \Algo{G-NCIS-APPROX-2}) is very close to the performance of the one with exact crawl value, i.e. \Algo{GREEDY-NCIS}. However, for larger number of web pages, the exact computation shows superiority and policies with approximated value function cannot utilize the CI signal. The performance of \Algo{GREEDY-CIS}, which assumes no false positive CI signals, is deteriorated with the number of pages. This policy fully relies on CI signals meaning that it assumes that the content of a page gets outdated upon incoming CI signal. Therefore it allocates more crawl bandwidth to those web pages which have CI signals with many false positive CI signals. 
This is justified by the empirical rates that can be seen in Figure~\ref{fig:exp_all_rates}, presented in Appendix~\ref{sec:emp_rates}. The results show that \Algo{GREEDY-CIS} achieves significantly higher empirical rates for some pages with high observability rate (indicated by red arrows), whereas \Algo{GREEDY-NCIS} is able to handle this more noisy CI signal efficiently.
This means that in the presence of false positive signals, \Algo{Greedy-CIS}, which assumes no false positive CIS, overly relies on the side information, that is, the false positives unnecessarily boost its empirical crawl rate.

\subsection{Real world change-rate and precision}
Finally, we conduct an experiment with empirical parameters (Figure~\ref{fig:real_data}). We take the dataset provided by \citet{kolobov2019staying} mentioned in Section~\ref{sec:cis empirical}.
We follow their protocol to create a semi-synthetic simulation environment: We subsample 100k URLs uniformly from the dataset, and use the provided importance and change-rate information for running simulations with the Poisson model. However, to reflect that CISs are not perfect, we generate CISs differently.

In their paper, \citet{kolobov2019staying} labelled a set of URLs havin CISs with perfect precision and recall (ca. 5\% of sampled URLs).
Since our measurements, presented in Section~\ref{sec:cis empirical}, did not validate the presence of such good signals, we apply the following procedure to produce CI signals for our experiments, which both respects the URL selection of \citep{kolobov2019staying} and the precision/recall distributions presented in Section~\ref{sec:cis empirical}: 
We take the fraction of URLs which the dataset labels as perfect precision and recall (ca. 5\% of sampled URLs).
We split the empirical precision and recall distributions of Section~\ref{sec:cis empirical} into a lower part consisting of the lowest 95\% values and the highest 5\% values. We sample precision and recall numbers for the labelled top URLs according to the upper tail of the distribution and for all other URLs from the lower end.
We set the crawl budget to 5000 per time step in accordance with prior work, evaluate the policies on 200 time steps and repeat the experiment 10 times. 

Since the policy of \citet{kolobov2019staying} is not trivially extendable to uniform crawling, we use \Algo{GREEDY-CIS+} for a fair comparison. This algorithm assigns the crawl value of \Algo{GREEDY} for all non-high-quality URLs (defined by Precision > 0.7 and Recall > 0.6 to match all top URLs without corruption) and the crawl value of \Algo{GREEDY-CIS} for high quality pages.

To simulate the impact of imperfect estimations, we corrupt all precision and recall numbers by mixing in uniformly sampled noise $\xi_i \sim \operatorname{unif}(0,1)$ with $precision = (1-p)precision+p\xi_i, recall=(1-p)recall + \xi_i$ for $p\in\{0,0.1,0.2\}$.

The accuracy of the crawling policies under the different settings are presented in Figure~\ref{fig:real_data}. We observe that splitting the pages into high and low quality and using simplified crawl values is close to optimal when the estimations are correct, but is less robust to corrupted precision and recall estimations.
\begin{figure}
  \centering
  \includegraphics[width=0.9\linewidth]{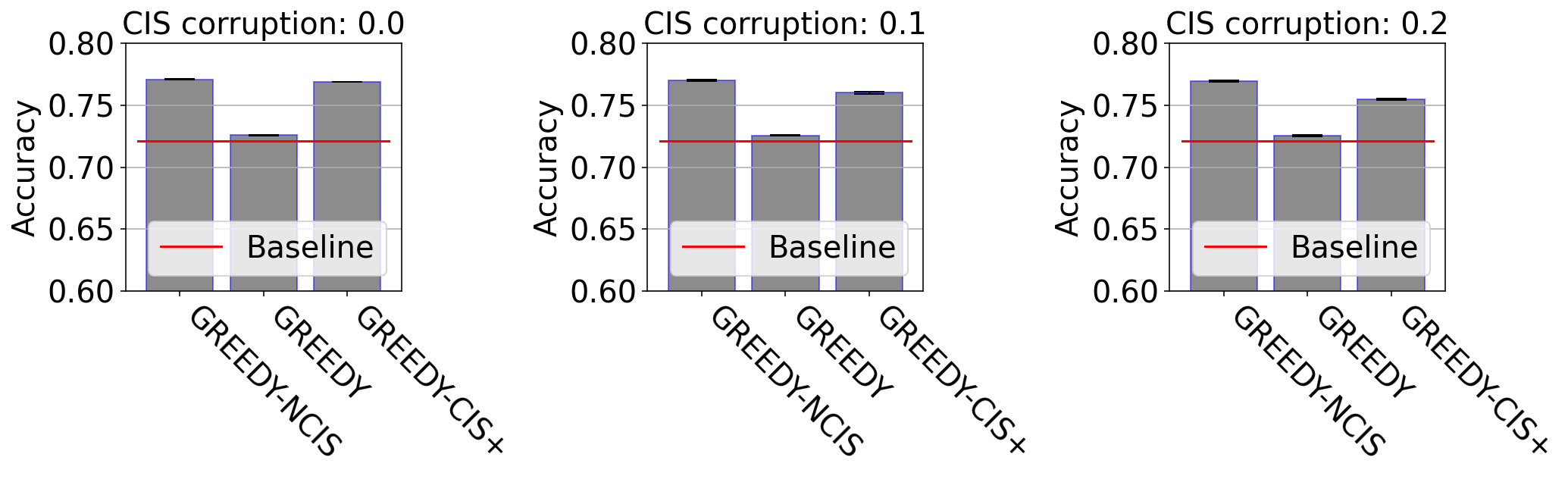}
  \caption{Accuracy of \Algo{GREEDY-NCIS},\Algo{GREEDY} and \Algo{GREEDY-CIS+} with corrupted change-indicating signals on 100k URLs. \label{fig:real_data}}
\end{figure}

\section{Conclusions and future work}
\label{sec:conclusion}

In this paper, we derived several discrete policies which can be implemented easily in a decentralized and scalable way which is demanded by the huge number of URLs that a web caching system has to cope with. In addition, these scalable policies are extended so as they are amenable to handle noisy CI signals with great efficiency. We empirically justified that the performance of the discrete policies is on par with the continuous one that requires lots of centralized computation, and, in addition, they are able to improve their performance with respect to the continuous optimal policy when CI signals are available. 

We tested our approach in a crawling system including couple billion URLs. The relative performance improvement and outcome of the experiments is reported in Appendix \ref{sec:realworld}. In addition to this, we  also tested \Algo{Greedy-NCIS} when the CI signals are delayed and the bandwidth constraint is changing. In this case, we apply a simple heuristic of discarding CI signals if they arrive shortly after a crawl event (see Appendix~\ref{sec:delayed_policy} for details). We found that this simple approach can handle delayed CISs if the delays follows an exponential distribution. In addition, we had found that our policies can adjust well to changing bandwidth which result is presented in Appendix \ref{sec:burnin}. We plan to devise a more principled way of handling delayed CISs in the future.

\bigskip


\bibliographystyle{ACM-Reference-Format}

\newpage 
\appendix
\onecolumn
\newpage 

\section{Proofs}
\label{sec:proofs}
We provide the proofs for our theoretical results in Section~\ref{sec:continuous}.

The following property of the residual function is useful in this section.
For ease of notation, from now on we drop the subscript $\exp$ for the function $\calR_{\exp}$. 
For any $i>0$, we have
\begin{align}
    \frac{\partial}{\partial x}\calR^i(x)= \frac{\partial}{\partial x}\left[1-\sum_{j=0}^i\frac{x^j}{j!\exp(x)}\right] =\calR^{i-1}(x)-\calR^{i}(x)\,.
\end{align}
\begin{proof}[Proof of Lemma~\ref{lem: threshold policy}]
Assume the opposite is true and with probability $p>0$, a crawl event is not triggered by a threshold rule.
This implies than we can find consecutive crawl intervals such that $\eff{\crawl_i}>\eff{\crawl_{i+1}}$, such that there is no CI signal directly at $\crawl_i$ and that these crawl intervals have positive volume on the real line. We show that one can increase the objective by moving the crawl event, which means that the current policy was not optimal.
Pick a small time $\epsilon$, such that no CI signal falls between $[\crawl_i-\epsilon,\crawl_{i}]$ and assume that the crawl event would have happened at time $\crawl_i-\epsilon$ instead. The original unnormalized objective over the two crawl intervals is
\[
\int_{\crawl_{i-1}}^{\crawl_i}\exp(-\alpha \eff{s})\,ds+
\int_{\crawl_{i}}^{\crawl_{i+1}}\exp(-\alpha \eff{s})\,ds\,.
\]
After changing the policy by shifting the value, we have an objective of
\begin{align*}
\int_{\crawl_{i-1}}^{\crawl_i-\epsilon}\exp(-\alpha \eff{s})\,ds+\int_{0}^{\epsilon}\exp(-\alpha s)\,ds
+
\int_{\crawl_{i}}^{\crawl_{i+1}}\exp(-\alpha (\eff{s}+\epsilon))\,ds\,.
\end{align*}
The difference in objective is
\begin{align*}
    (1-p_i\cdot \exp(\alpha \epsilon))&\int_{0}^{\epsilon}\exp(-\alpha s))\,ds
    -(1-\exp(-\alpha\epsilon))\, \times \\ &\int_{\crawl_{i}}^{\crawl_{i+1}}\exp(-\alpha \eff{s})\,ds
    \geq \frac{1-\exp(-\alpha\epsilon)}{\alpha}(p_{i+1}-p_i\cdot \exp(\alpha \epsilon))\,,
\end{align*}
where the last inequality follows from
\begin{align*}
    \int_{\crawl_{i}}^{\crawl_{i+1}}\exp(-\alpha \eff{s}) &\leq \int_{0}^{\eff{\crawl_{i+1}}}\exp(-\alpha s)\,ds 
    = \frac{1-p_{i+1}}{\alpha}\,.
\end{align*}
Since We assumed $p_{i} < p_{i+1}$, we can find an $\epsilon >0$ such that the objective increases with our policy change. Hence the original policy was not optimal.
\end{proof}

\begin{proof}[Proof of Lemma~\ref{lem: derivative ratio}]
We can consider $\crawl_{1}$ as a function of the threshold $\iota$ and consider any realization of the CI signal process at which changing the threshold changes the first crawl event. By the Leibniz integral rule for differentiation, we have
\begin{align*}
    &\frac{\partial}{\partial t}\int_{0}^{\crawl_{1}(\iota)}\exp(-\alpha\eff{s})\,ds 
    = \exp(-\alpha\eff{\crawl_{1}(\iota)})\frac{\partial}{\partial t}\crawl_{1}(\iota)\,.
\end{align*}
We assume changing the threshold changes the time of crawl, hence $\eff{\crawl_{1}(\iota)}=\iota.$
Taking the expectation over all these trajectories finishes the proof.
\end{proof}

\begin{proof}[Proof of Lemma~\ref{lem: monotonuous}]
Using Lemma~\ref{lem: derivative ratio} and
\[V(\iota) = \tilde \mu(w(\iota)-\exp(-\alpha \iota)\psi(\iota))\]
yields
\[V'(\iota) = \tilde\mu \alpha \exp(-\alpha \iota)\psi(\iota)>0 \,.\]
Hence the function $V$ is monotonically increasing.
The derivative of the function $\calR^i(x)$ is $$\calR^{i-1}(x)-\calR^{i}(x)=\frac{x^i}{i!}\exp(-x)> 0.$$
Hence the derivative of the function $\psi(\iota)$ is strictly positive and the function $f$ is monotonically decreasing.
\end{proof}

\begin{proof}[Proof of Theorem~\ref{thm:main}]
The contribution of page $i$ to the overall objective $O(\pi(\biota))$ (i.e., its weighted expected freshness) can be expressed as 
\[
o(\iota_i;\mathcal{E}_i)= \tilde \mu_i \, \E_{\pi(\biota),t\sim\text{unif}(\mathbb{R}_+)}\left[\exp(-\alpha_i\tau_i^{\textsc{eff}}(t))\right]\,,
\]
which again only depends on the parameters $\iota_i$ and $\mathcal{E}_i$.
Then 
\[
O(\pi) = \sum_{i=1}^m o(\iota_i;\mathcal{E}_i),
\]
and finding the optimal policy with bandwidth constraint $R$ that maximizes \eqref{eq:opt_objective} reduces to the optimization problem 
\begin{align*}
\text{maximize} \quad \sum_{i=1}^m o(\iota_i;\mathcal{E}_i)
\quad \text{subject to} \quad \sum_{i=1}^m f(\iota_i;\mathcal{E}_i) \leq R\,,
\end{align*}
or by substituting $\xi_i=f(\iota_i;\mathcal{E}_i)$, we have equivalently
\begin{align}\label{eq:general_optimization}
\text{maximize} \quad \sum_{i=1}^m o(f^{-1}(\xi_i;\mathcal{E}_i);\mathcal{E}_i)
\quad \text{subject to} \quad  \sum_{i=1}^m \xi_i \leq R\,,
\end{align}
where the inverse of $f$ is with respect to its first argument. Since the crawl intervals $\crawl_{i,n+1}-\crawl_{i,n}$ are i.i.d. random variables for any $n\in\mathbb{N}$ due to the Poisson processes, we can rewrite the frequency as the inverse of the average length between two subsequent crawls.

The objective in \eqref{eq:general_optimization} is the sum of the expected freshness of the different web pages at a random point in time, weighted by the request intensity $\tilde \mu_i$, similarly to the seminal work of \cite{Azar8099}. In fact, it is easy to show that \eqref{eq:general_optimization} matches the objective in \cite{Azar8099} in the absence of CI signals, in which case $\alpha_i=\Delta_i$ for all $i \in [m]$, and the crawling interval is deterministic with length $\iota_i$. Consequently, picking a random point in time is equivalent to picking a random point in any interval between two crawls, since all intervals are of the same length. So in this case, for any threshold $\iota \ge 0$ and Poisson parameters $\mathcal{E}$, the objective function boils down to
\begin{align*}
    o(\iota; \mathcal{E}) &= \tilde{\mu}\cdot\frac{1}{\iota}\int_{0}^{\iota}\exp(-\Delta s)\,ds
    =\tilde{\mu}\cdot\frac{1}{\Delta \iota}(1-\exp(-\Delta \iota))\,,
\end{align*}
and
\begin{align*}
    o(f^{-1}(\xi;\mathcal{E});\mathcal{E}) = G(\xi;\tilde\mu,\Delta) := \frac{\tilde \mu}{\Delta} \xi \left(1- \exp \left( -\frac{\Delta}{\xi}\right)  \right)\,.
\end{align*}
In the absence of CISs, the optimal policy is given by the optimization problem
\begin{align}\label{eq:cd_optimization}
\text{maximize} \quad 
\sum_{i=1}^m G(\xi_i; \mu_i, \Delta_i )
\quad \text{subject to} \quad \sum_{i=1}^m \xi_i \leq R \,.
\end{align}
which is a special case of \eqref{eq:general_optimization}.
The solution $\xi^*=(xi^*_1,\ldots,\xi^*_m)$ of \eqref{eq:cd_optimization} results in a policy which crawls web page $i$ in optimal intervals of length exactly $1/\xi^*_i$.
(In fact, this optimization objective is already introduced in equation~6 of~\cite{Azar8099}, where it is referenced from \cite{ChGa03}.)

Before we derive the functions $o$ and $f$ for the web-ping setting, we first discuss how to actually solve the optimization problem \eqref{eq:general_optimization} in the general form.
By the Karush–Kuhn–Tucker conditions \cite{Boyd2004}, any local optimum $\xi^*$ satisfies for all $i\in[m]$
\begin{align*}
    &\left.\frac{\partial}{\partial x} o(f^{-1}(x;\mathcal{E}_i);\mathcal{E}_i)\right|_{x=\xi^\star_i} = \Lambda\\
    \text{ or } &\left.\frac{\partial}{\partial x} o(f^{-1}(x;\mathcal{E}_i);\mathcal{E}_i)\right|_{x=\xi^\star_i} < \Lambda\text{ and }\xi^*_i=0\,
\end{align*}
for some Lagrange multiplier $\Lambda \ge 0$ (obtained by solving \eqref{eq:general_optimization} with Lagrange's method). The case $\xi^*_i=0$ corresponds to never crawling a web page. This can be the optimal for maximizing the objective if there are web pages with low importance $\tilde\mu$ and high change rate $\Delta$.
In practice, completely abandoning web pages might be unacceptable and can be alleviated by enforcing a hard threshold $\xi_i > \varepsilon$ on the crawl frequency.

Under sufficient regularities, e.g., $o\circ f^{-1}$ being concave, this is also a sufficient condition for optimality.
Define the function
\[
V(\iota;\mathcal{E}) = \left.\frac{\partial}{\partial x} o(f^{-1}(x;\mathcal{E});\mathcal{E})\right|_{x=f(\iota;\mathcal{E})}\,,
\]
then any optimal threshold $\iota^\star$ satisfies
\begin{align*}
    &V(\iota^\star_i;\mathcal{E}_i) = \Lambda\qquad 
    \text{ or } &V(\iota^\star_i;\calE_i) < \Lambda\text{ and }\iota^\star_i=\infty 
    \,
\end{align*}
subject to the bandwidth constraint 
$\sum_{i=1}^mf(\iota^\star_i,\calE_i)=R$.

\paragraph{Computing $V$ and $f$ with CI signals.}
We define the following auxiliary functions for a policy parameterized by $\iota$ so as
we drop the web page subscript of all quantities, since we are considering a single web page:
\begin{align}
    &w(\iota;\calE) = \E_{\pi(\biota)}\left[\int_{0}^{\crawl}\exp(-\alpha\tau^{\textsc{eff}}(s))\,ds\right]\\
    &\psi(\iota;\calE) = \E_{\pi(\biota)}\left[\crawl\right]\,.
\end{align}
These quantities are the expected cumulative freshness between two consecutive crawl events and the expected length of that interval, respectively.
These two functions are closely related as the following lemma shows.
\begin{lemma}
\label{lem: derivative ratio}
The derivatives of $w,\psi$ with respect to the first argument satisfy for any environment $\calE$
    $w'(x;\calE) = \exp(-\alpha x)\psi'(x;\calE)\,.$
\end{lemma}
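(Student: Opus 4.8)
The plan is to condition on a realization of the CI signal arrival process --- which turns the inner integral defining $w$ into a deterministic function of the threshold --- differentiate it by the Leibniz rule, and then interchange differentiation and expectation. By the renewal structure of the Poisson processes it suffices to consider the inter-crawl interval that starts at time $0$ with $\tau^{\textsc{elap}}(0)=0$ and no signals yet received. Fix the arrival times $0<s_1<s_2<\dots$ of the CI signal process (a Poisson process of rate $\gamma$) and write $\tau^{\textsc{eff}}(s)=s+\beta N(s)$, where $N(s)=|\{k:s_k\le s\}|$; this is a nondecreasing, right-continuous function of $s$ that increases with unit slope between signals and jumps by $\beta$ at each $s_k$. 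Let $u(\iota)=\inf\{s\ge 0:\tau^{\textsc{eff}}(s)\ge\iota\}$ be the first crawl time of $\pi(\biota)$ for this realization; since $\tau^{\textsc{eff}}(s)\ge s$ we have $u(\iota)\le\iota$, so both $u(\iota)$ and $\int_0^{u(\iota)}\exp(-\alpha\tau^{\textsc{eff}}(s))\,ds$ are bounded by $\iota$.

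Next I would analyze $\iota\mapsto u(\iota)$ for this fixed realization. It is nondecreasing and piecewise linear with only two slopes. If $\iota$ is attained inside a linear piece of $\tau^{\textsc{eff}}$ (the \emph{active} case), the crossing point moves at unit speed, so $u'(\iota)=1$ and, crucially, $\tau^{\textsc{eff}}(u(\iota))=\iota$ exactly. If instead $\iota$ falls strictly inside one of the intervals jumped over at some $s_k$ (the \emph{plateau} case), the crawl stays pinned at $s_k$, so $u'(\iota)=0$. The only non-differentiability points of $u(\cdot)$ are the at-most-countably-many boundaries between these regimes, and for a \emph{fixed} $\iota$ these form a null set of realizations because the $s_k$ have continuous distributions. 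Hence, for every fixed $\iota$ and almost every realization, the Leibniz integral rule yields
\[
\frac{d}{d\iota}\int_0^{u(\iota)}\exp(-\alpha\tau^{\textsc{eff}}(s))\,ds
=\exp\!\big(-\alpha\,\tau^{\textsc{eff}}(u(\iota))\big)\,u'(\iota)
=\exp(-\alpha\iota)\,u'(\iota),
\]
where the last step uses $\tau^{\textsc{eff}}(u(\iota))=\iota$ in the active case, and holds trivially as $0=0$ in the plateau case, where $u'(\iota)=0$.

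Finally I would push the derivative through the expectation. For every $h$, the difference quotients $h^{-1}\bigl(u(\iota+h)-u(\iota)\bigr)$ and $h^{-1}\int_{u(\iota)}^{u(\iota+h)}\exp(-\alpha\tau^{\textsc{eff}}(s))\,ds$ lie in $[0,1]$ for all realizations, since $u(\cdot)$ is $1$-Lipschitz and the integrand never exceeds $1$; dominated convergence therefore gives $\psi'(\iota)=\E_{\pi(\biota)}[u'(\iota)]$ and
\[
w'(\iota)=\E_{\pi(\biota)}\bigl[\exp(-\alpha\iota)\,u'(\iota)\bigr]=\exp(-\alpha\iota)\,\E_{\pi(\biota)}[u'(\iota)]=\exp(-\alpha\iota)\,\psi'(\iota),
\]
which is exactly the claimed identity. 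I expect the main obstacle to be precisely the bookkeeping in the middle step: checking that the plateau realizations --- which can carry positive probability once $\beta<\infty$, and which are exactly where the naive identity $\tau^{\textsc{eff}}(u(\iota))=\iota$ fails --- contribute zero to both $w'$ and $\psi'$, and that the regime-boundary thresholds are negligible, whereas the final interchange is routine given the uniform Lipschitz bound.
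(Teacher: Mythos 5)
Your proposal is correct and follows essentially the same route as the paper's proof: condition on a realization of the CI signal process, apply the Leibniz rule to the inner integral using $\tau^{\textsc{eff}}(\crawl_1(\iota))=\iota$ when the crossing is active, and then take expectation. The only difference is that you spell out what the paper leaves implicit --- that plateau realizations (where the threshold change does not move the crawl time) contribute $0=0$ to both derivatives, and that the derivative--expectation interchange is justified by the uniform Lipschitz/boundedness of the difference quotients --- which is a welcome tightening rather than a different argument.
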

These two functions allow us to express $V$ and $f$ nicely. As mentioned before, the frequency is simply the inverse expected length, so $f(\iota;\calE)=1/\psi(\iota;\calE)$.
The objective $o$ is given by
\begin{align*}
    o(\iota;\calE) = \tilde\mu\cdot w(\iota;\calE)\cdot f(\iota;\calE)\,,
\end{align*}
hence by the inverse function rule, we have
\begin{align*}
    V(\iota;\calE)
    &=
    \tilde\mu\left.\frac{\partial}{\partial x}\left[w(f^{-1}(x;\calE);\calE)\cdot x\right]\right|_{x=f(\iota;\calE)}\\
    &=\tilde{\mu}\left(w(\iota;\calE)+\frac{w'(\iota;\calE)}{f'(\iota;\calE)}\right)\\
    &=\tilde{\mu}\left(w(\iota;\calE)-\exp(-\alpha \iota)\psi(\iota;\calE)\right)\,,
\end{align*}
where the last equation uses Lemma~\ref{lem: derivative ratio} and $\frac{1}{f'(x)}=-\frac{\psi(x)}{\psi'(x)}$.
Finally, we present the analytical solutions for the functions $\psi$ and $w$.
\begin{lemma}
\label{lem: function values}
The expected length of an interval between two consecutive crawl events and its cumulative freshness under the threshold-policy $\pi(\biota)$ in environment $\calE$ are given by
\begin{align*}
    &\psi(\iota;\calE)=\sum_{i=0}^{\lfloor\frac{t}{\beta}\rfloor}\frac{1}{\gamma}\calR^i_{\exp}(\gamma( \iota-i\beta))\\
    &w(\iota;\calE)=\sum_{i=0}^{\lfloor\frac{\iota}{\beta}\rfloor}\frac{\nu^i}{(\Delta+\nu)^{i+1}}\calR^i_{\exp}((\alpha+\gamma) (\iota-i\beta))\,,
\end{align*}
where $\calR^i_{\exp}$ denotes the normalized residual of the $i$-th Taylor approximation of the exponential function
$\calR^i_{\exp}(x) = \frac{\exp(x)-\sum_{j=0}^i\frac{x^j}{j!}}{\exp(x)}\,.$
\end{lemma}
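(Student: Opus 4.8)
\emph{The plan} is to compute $\psi$ and $w$ by direct integration over a single crawl cycle. Put the previous crawl at time $0$, so that on the cycle $\tau^{\textsc{elap}}(s)=s$ and $n^{\textsc{cis}}(s)=N(s)$, where $N$ is the rate-$\gamma$ Poisson process of CI signals received after the crawl; hence $\tau^{\textsc{eff}}(s)=s+\beta N(s)$. Since $\beta\in(0,\infty)$ this is non-decreasing and strictly increasing between signals, so the first-hitting rule defining $\pi(\biota)$ gives $\{\crawl>s\}=\{\tau^{\textsc{eff}}(s)<\iota\}$, and $\tau^{\textsc{eff}}(s)\ge s$ forces $0<\crawl\le\iota$ always. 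The first step is to rewrite this event as $\{\crawl>s\}=\{N(s)\le\lfloor(\iota-s)/\beta\rfloor\}$, valid for every $s$ outside the finite set where $(\iota-s)/\beta$ is an integer. With that in hand, Tonelli (all integrands nonnegative) gives the layer-cake representations
\[
\psi(\iota;\calE)=\E_{\pi(\biota)}[\crawl]=\int_0^\iota\prob(\crawl>s)\,ds,\qquad
w(\iota;\calE)=\int_0^\iota\E_{\pi(\biota)}\!\left[e^{-\alpha\tau^{\textsc{eff}}(s)}\,\mathbb{I}\{\crawl>s\}\right]ds,
\]
and the rest is bookkeeping with Poisson sums.

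\emph{For $\psi$}, I would insert the Poisson pmf, $\prob(\crawl>s)=e^{-\gamma s}\sum_{j=0}^{\lfloor(\iota-s)/\beta\rfloor}(\gamma s)^j/j!$, then use $j\le\lfloor(\iota-s)/\beta\rfloor\iff s\le\iota-j\beta$ to interchange the finite sum with the integral, obtaining $\psi(\iota;\calE)=\sum_{j=0}^{\lfloor\iota/\beta\rfloor}\int_0^{\iota-j\beta}e^{-\gamma s}\frac{(\gamma s)^j}{j!}\,ds$. The substitution $u=\gamma s$ together with the identity $\int_0^x e^{-u}\frac{u^j}{j!}\,du=1-e^{-x}\sum_{k=0}^j\frac{x^k}{k!}=\calR^j(x)$ — which is exactly the antiderivative of the relation $\frac{\partial}{\partial x}\calR^j(x)=\frac{x^j}{j!}e^{-x}$ recorded at the start of this appendix, normalized by $\calR^j(0)=0$ — turns each summand into $\frac1\gamma\calR^j(\gamma(\iota-j\beta))$, which is the claimed formula.

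\emph{For $w$}, the only additional inputs are two model identities: $e^{-\alpha\beta}=\nu/\gamma$ (immediate from $\beta=-\log(\nu/\gamma)/\alpha$) and $\alpha+\gamma=(1-\lambda)\Delta+(\lambda\Delta+\nu)=\Delta+\nu$. Using the first,
\[
\E_{\pi(\biota)}\!\left[e^{-\alpha\tau^{\textsc{eff}}(s)}\mathbb{I}\{\crawl>s\}\right]
=e^{-\alpha s}\sum_{j=0}^{\lfloor(\iota-s)/\beta\rfloor}e^{-\alpha\beta j}\,e^{-\gamma s}\frac{(\gamma s)^j}{j!}
=e^{-(\alpha+\gamma)s}\sum_{j=0}^{\lfloor(\iota-s)/\beta\rfloor}\frac{(\nu s)^j}{j!},
\]
where inserting $e^{-\alpha\beta j}=(\nu/\gamma)^j$ cancels the powers of $\gamma$. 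The same swap of sum and integral as above yields $w(\iota;\calE)=\sum_{j=0}^{\lfloor\iota/\beta\rfloor}\int_0^{\iota-j\beta}e^{-(\alpha+\gamma)s}\frac{(\nu s)^j}{j!}\,ds$, and the substitution $u=(\alpha+\gamma)s$, using $\alpha+\gamma=\Delta+\nu$ to reshape the powers, rewrites each summand as $\frac{\nu^j}{(\Delta+\nu)^{j+1}}\int_0^{(\alpha+\gamma)(\iota-j\beta)}e^{-u}\frac{u^j}{j!}\,du=\frac{\nu^j}{(\Delta+\nu)^{j+1}}\calR^j((\alpha+\gamma)(\iota-j\beta))$, exactly as stated.

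\emph{The main obstacle} is not analytic but a matter of care in the very first step: rigorously deriving $\{\crawl>s\}=\{N(s)\le\lfloor(\iota-s)/\beta\rfloor\}$ from the monotonicity and right-continuity of $\tau^{\textsc{eff}}$ while discarding the finite set of exceptional $s$, and then justifying the interchange of the $s$-dependent finite sum with the integral. It is also worth verifying that the closed forms degenerate correctly — $\beta\to\infty$ (equivalently $\nu\to0$) leaves only the $j=0$ term, recovering the \Algo{Greedy\_CIS} value function, and $\gamma\to0$ collapses to the vanilla deterministic-interval case $\psi(\iota)=\iota$, $w(\iota)=\frac1\Delta(1-e^{-\Delta\iota})$ — and, as a consistency check against Lemma~\ref{lem: derivative ratio}, that $w'(\iota)=e^{-\alpha\iota}\psi'(\iota)$ holds at every $\iota$ that is not an integer multiple of $\beta$.
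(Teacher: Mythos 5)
Your proof is correct, but it follows a genuinely different route from the paper's. You compute both quantities directly: writing $\psi(\iota)=\int_0^\iota\prob(\crawl>s)\,ds$ and $w(\iota)=\int_0^\iota\E[e^{-\alpha\tau^{\textsc{eff}}(s)}\mathbb{I}\{\crawl>s\}]\,ds$ via Tonelli, identifying $\{\crawl>s\}=\{N(s)\le\lfloor(\iota-s)/\beta\rfloor\}$ from monotonicity of $\tau^{\textsc{eff}}$, interchanging the $s$-dependent Poisson sum with the time integral, and finishing with the incomplete-gamma identity $\int_0^x e^{-u}u^j/j!\,du=\calR^j_{\exp}(x)$ together with the model identities $e^{-\alpha\beta}=\nu/\gamma$ and $\alpha+\gamma=\Delta+\nu$. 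The paper instead proves the formula for $\psi$ by induction over the intervals $(i\beta,(i+1)\beta]$, using a renewal-style recursion that conditions on the first CI signal $\ping_1$, and then establishes $w$ only indirectly: it checks $w(0)=0$ and verifies that the derivative of the claimed closed form satisfies $w'(\iota)=e^{-\alpha\iota}\psi'(\iota)$, invoking Lemma~\ref{lem: derivative ratio} to conclude the guessed form must be correct. Your argument is more self-contained and constructive — it derives $w$ rather than verifying a guessed expression, treats $\psi$ and $w$ uniformly, and does not rely on Lemma~\ref{lem: derivative ratio} at all (you use it only as a consistency check) — at the cost of needing the slightly careful bookkeeping you flag: the exceptional (finite, hence Lebesgue-null) set of $s$ with $(\iota-s)/\beta\in\mathbb{N}$ and the Tonelli justification for swapping sum and integral. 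The paper's guess-and-verify route avoids handling the distribution of $\tau^{\textsc{eff}}(s)$ explicitly for $w$, but presupposes the answer and leans on the derivative-ratio lemma. Both are valid; your limiting checks ($\beta\to\infty$, $\gamma\to 0$) are consistent with the special cases listed in Section~\ref{sec: value functions}.
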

Note that the computational complexity of evaluating the functions $\psi,w$ grows in $t$. To avoid unbounded computation, one can approximate the function values very well by terminating the summation after a small finite number of terms (e.g. 3, see Figure~\ref{fig:V function}), since the residual of the $i$-th Taylor approximation converges quickly to 0. 

\begin{figure}[t]
  \centering
  \includegraphics[width=0.4\linewidth]{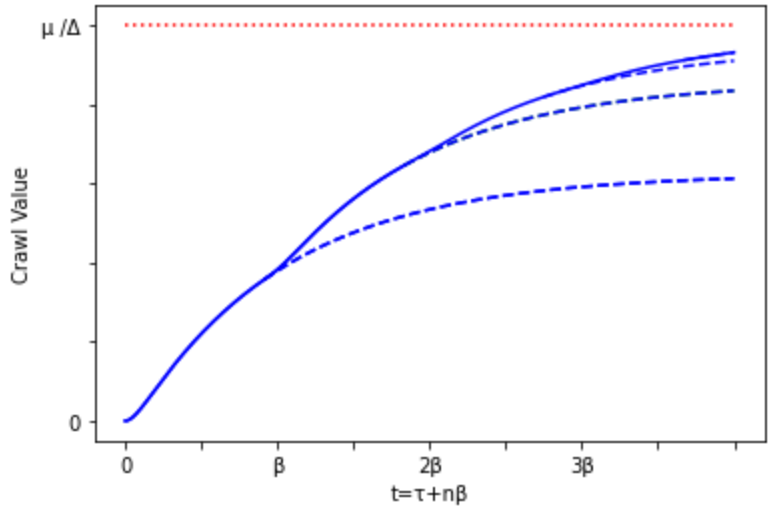}
  \caption{Example of the crawl-value function $V$. Dashed lines are approximations when terminating the sum after 1, 2 or 3 terms, respectively.
  Red line is the asymptotic value of all functions for $\iota\rightarrow\infty$.}
  \label{fig:V function}
\end{figure}
\end{proof}

\begin{proof}[Proof of Lemma~\ref{lem: function values}]
We begin with the function $\psi$. Assume $\iota\in[0,\beta]$. Then the length of the first interval is given by $\min\{\ping_1,\iota\}$, since the first received CI signal puts the effective time above the threshold.
The expected length is given by
\begin{align*}
    \psi(\iota;\calE)&=\int_{0}^\iota\gamma \exp(-\gamma s)s\,ds+\exp(-\gamma \iota)\iota\\
    &=\left[-\exp(-\gamma s)s\right]^\iota_0-\int_{0}^\iota-\exp(-\gamma s)\,ds+\exp(-\gamma \iota)\iota\\
    &=\frac{1-\exp(-\gamma \iota)}{\gamma}\\
    &=\frac{\calR^0(\gamma \iota)}{\gamma}\,,
\end{align*}
which is the same function value as claimed.
Now by induction, the show that the function is correct for any $\iota$. Assume that the formula is correct for any $\iota'\in[0,i\beta]$, then we show that it is also correct for $\iota\in(i\beta, (i+1)\beta]$.
The expected length between crawls can be given recursively by the time that passes until $\ping_{1}\in[0,\iota-i\beta]$ plus the length for the remaining threshold, or $\iota-i\beta$ plus the length for a threshold $i\beta$ if no CI signal was received in $[0,\iota-i\beta]$ (omitting $\calE$ for brevity) 
\begin{align*}
    \psi(\iota) &= \E[\mathbb{I}\{\ping_{1}\in[0,\iota-i\beta]\}(\ping_{1}+\psi(\iota-\beta-\ping_{1}) +\mathbb{I}\{\ping_{1}>\beta\}(\iota-i\beta+\psi(i\beta)]\\
    &=\int_{0}^{\iota-i\beta}\gamma\exp(-\gamma s)(s+\psi(\iota-\beta-s))\,ds +\exp(-\gamma(\iota-i\beta))(\iota-i\beta+\psi(i\beta))\\
    &=\psi(\iota-i\beta)+\int_{0}^{\iota-i\beta}\sum_{j=0}^{i-1}\frac{\calR^j(\gamma(\iota-(j+1)\beta-s))}{\exp(\gamma s)}\,ds +\exp(-\gamma(\iota-i\beta))\psi(i\beta)\,.
\end{align*}
The middle term is
\begin{align*}
    \int_{0}^{\iota-i\beta}\sum_{j=0}^{i-1}& \frac{\calR^j(\gamma(\iota-(j+1)\beta-s))}{\exp(\gamma s)}\,ds
    \\
    &=\sum_{j=0}^{i-1}\left[-\frac{\calR^{j+1}(\gamma(\iota-(j+1)\beta-s))}{\gamma\exp(\gamma s)}\right]^{\iota-i\beta}_0\\
    &=\sum_{j=0}^{\lfloor\frac{\iota}{\beta}\rfloor}\frac{\calR^j((\iota-\beta j)\gamma)}{\gamma}-\frac{\calR^0(\gamma\tau)}{\gamma}+\frac{\calR^0(\gamma i\beta)}{\gamma\exp(\gamma(\iota-i\beta))} -\exp(-\gamma(\iota-i\beta))\psi(i\beta)\\
    &=\sum_{j=0}^{\lfloor\frac{\iota}{\beta}\rfloor}\frac{\calR^j((\iota-\beta j)\gamma)}{\gamma}-\psi(\iota-i\beta)-\exp(-\gamma(\iota-i\beta))\psi(i\beta)\,.
\end{align*}
Combining this with the previous equation finishes the derivation for $\psi(\iota)$.

Next, we show that the claimed function for $w$ is correct. Note that $w(0)=0$ which is the correct value.
By Lemma~\ref{lem: derivative ratio}, it is sufficient to show that the ratio of derivatives between $\psi(x)$ and $w(x)$ matches.
Recall
\begin{align*}
    \frac{\partial}{\partial x}\calR^i(x) = \calR^{i-1}(x)-\calR^{i}(x)=\frac{x^i}{i!}\exp(-x)\,.
\end{align*}
Hence
\begin{align*}
\left.\frac{\partial}{\partial x}\psi(x)\right|_{x=\iota} = \sum_{i=0}^{\lfloor\frac{t}{\beta}\rfloor} \frac{(\gamma(\iota-i\beta))^i}{i!\exp(\gamma(\iota-i\beta))}\,.
\end{align*}
The derivative of our claimed form of $w$ is
\begin{align*}
    \left.\frac{\partial}{\partial x}w(x)\right|_{x=\iota} &= \sum_{i=0}^{\lfloor\frac{t}{\beta}\rfloor} \frac{\gamma^{i}}{(\alpha+\gamma)^{i}}\frac{((\alpha+\gamma)(\iota-i\beta))^i}{i!\exp(\alpha \iota+\gamma(\iota-i\beta))}\\
    &= \sum_{i=0}^{\lfloor\frac{t}{\beta}\rfloor} \frac{(\gamma(\iota-i\beta))^i}{i!\exp(\gamma(\iota-i\beta))}\exp(-\alpha \iota)\,.
\end{align*}
Hence the ratio of the derivatives satisfy Lemma~\ref{lem: derivative ratio}, which implies that we have found the correct analytical form of $w(x)$.
\end{proof}

\subsection{Crawl value based on approximation}
\label{app: value functions}

{\bf Approximations of \Algo{Greedy\_NCIS}:} Since the value function in the general case is computationally demanding when $\lfloor\frac{t}{\beta}\rfloor$ is large, we also consider an approximation where we set any higher order residuals to $0$ in the computation of the crawl value. We denote the $j$-level approximation of the general value function by
    \begin{align*}
        &V_{\textsc{G\_NCIS-Approx-j}}(\iota,\calE)=
         \tilde\mu\sum_{i=0}^{\min\{j-1,\lfloor\frac{\iota}{\beta}\rfloor\}}\Big( \frac{\nu^i}{(\Delta+\nu)^{i+1}}\calR^i_{\exp}((\alpha+\gamma) (\iota-i\beta))-\frac{\exp(-\alpha \iota)}{\gamma}\calR^i_{\exp}(\gamma( \iota-i\beta))\Big)\,.
    \end{align*}

\smallskip

\section{Comparison of empirical rates of \Algo{GREEDY} and \Algo{LDS}}
\label{app:emp_rates_comp}

Even if the performances of the \Algo{GREEDY} and \Algo{LDS} algorithms are very similar, they implement quite different scheduling strategies. When we compare the empirical rates of \Algo{GREEDY} and \Algo{LDS} to the rates of the \Algo{Baseline} method that can be obtained by solving \eqref{eq:cd_optimization}, we can see that even if the performances of these two algorithms are on-par, the rates for the same pages are different from each other. Figure~\ref{fig:exp_1_rates} shows the empirical rates achieved by \Algo{GREEDY} and \Algo{LDS} with 100 and 500 web pages takes from 10 problem instances. The empirical rates of \Algo{LDS} are very close to the optimal continuous policy rates since dots are on the diagonal red line. This result can be explained by the fact that the \Algo{LDS} is based on a low discrepancy scheduling algorithm whose objective is to directly minimize the deviation of these two rates over time. However, the \Algo{GREEDY} takes into account the staleness of web pages when it picks the next web page to be crawled.

\begin{figure*}[ht!]
  \centering
  \includegraphics[width=0.5\linewidth]{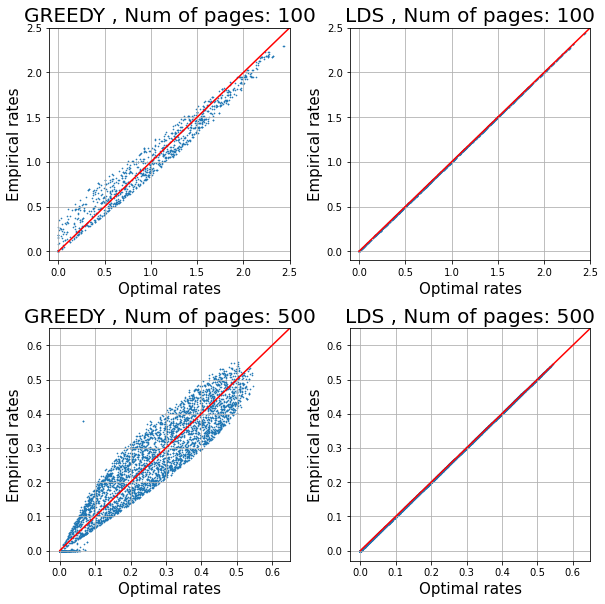}
  \caption{Empirical rates for various web pages achieved by discrete policies without using change-indicating signals. Each dot corresponds to a web page for which the optimal rate versus the empirical rate of the corresponding method is plotted. The web pages are taken from 10 synthetic problem instances with 100 and 500 web pages, respectively. The optimal rates are computed based on \eqref{eq:cd_optimization}, and they correspond to the \Algo{Baseline} method. \Algo{LDS} corresponds to Algorithm~3 of \cite{Azar8099} with the rates of \Algo{Baseline} method as input.). \label{fig:exp_1_rates}}

\end{figure*}

\section{Delayed CI signals}
\label{sec:delayed_policy}
So far, we have assumed that CI signals arrive instantaneous with the corresponding change event.
In practice, there is some latency either due to the network or the entity generating the CI signals itself.
However, the bulk of web pages changes on a scale of several days, and for such pages any CI signal delay is minuscule compared to the average interval between change events.
We propose to simply discard CI signals that arrive within an interval $[\crawl,\crawl+T_{\textsc{delay}}]$ after a crawl for some tuneable parameter $T_{\textsc{delay}}$, to ensure that we do not make decisions based on out-dated CI signals.

Next experiment, we assess the impact of the delay of CI signals on the performance of \Algo{GREEDY-NCIS} policy and to what extent the proposed thresholded approach, defined in Section \ref{sec:delayed_policy} can remedy the performance drop that is caused by the delay of CI signals. The thresholding consists of discarding the CI signal for a page $i$ when it is close to a recent crawl event that fetches the content of page $i$. We set this time window to $T_{\text{DELAY}} = 5/R$.

The problem instances we use in this experiment are generated in a similar way as in the previous section: the partial observability parameter is generated as $\lambda_i \sim \text{Beta} (0.25, 0.25)$, and the rate of false positive CI signals is $v_i \sim \text{Unif}(0.1, 0.6)$. In addition to this, the CI signals are delayed with a random quantity that is generated from the Poisson distribution with $v=6$ as it is described in Section \ref{sec:problem_inst}.

Figure~\ref{fig:exp_delay} shows the result with the delayed CI signals. We indicate the baseline by the red line as before which is the performance of the optimal continuous policy without using CI signals. In addition to this we also indicate the performance of \Algo{GREEDY-NCIS} policy without delayed CI signals by the blue line. The performance that is indicated by the blue line coincides with the one reported in Figure \ref{fig:exp_all}. Based on the results, one can see that the delay of the CI signals indeed deteriorates the performance of \Algo{GREEDY-NCIS} when the bandwidth is not so tight, and has marginal impact when the crawling problem has tighter bandwidth allocation. This can be explained by the fact that when the bandwidth is tighter the improvement achieved by \Algo{GREEDY-NCIS} over \Algo{GREEDY} policy, which does not use CI signals, is smaller, which implies that the CI signals do not help so much in this case. Thus, their delay has not so significant impact in that case, either. Nevertheless, when we compare the performance of \Algo{GREEDY-NCIS-D} to the performance of \Algo{GREEDY-NCIS} with no delay (blue line), we see that when $m=100$, their performances are on-par. In this case, this simple thresholding policy can almost recover the performance drop caused by delay.

\begin{figure*}[h]
  \centering
  \includegraphics[width=0.5\linewidth]{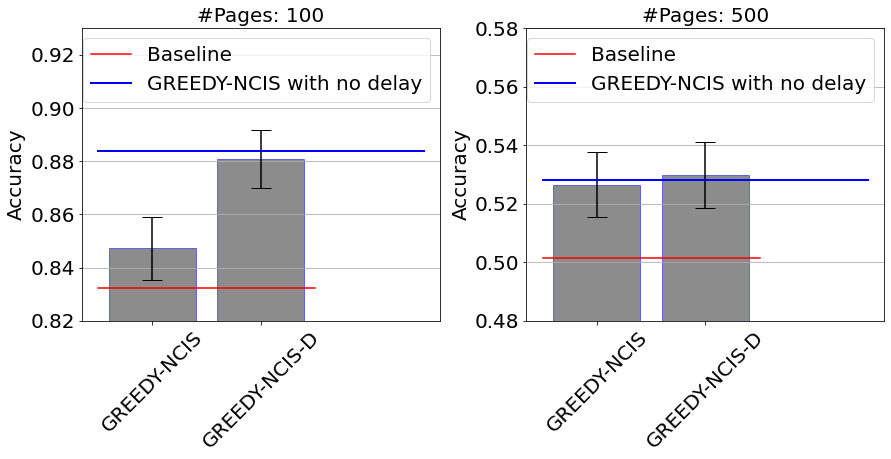}
  \caption{Accuracy achieved by discrete policies including \Algo{GREEDY-NCIS} and \Algo{GREEDY-NCIS-D}. The partial observability parameter is generated as $\lambda_i \sim \text{Beta} (0.25, 0.25)$, and the rate of false positive CI signals as $v_i \sim \text{Unif}(0.1, 0.6)$. In addition, the CI signals are delayed with a random quantity that is generated from the Poisson distribution with $v=6$. The parameter selection is described in Section~\ref{sec:problem_inst}. }
  \label{fig:exp_delay}
\end{figure*}

\section{Burn-in time}
\label{sec:burnin}

In the next experiment, we demonstrate that the discrete policy \Algo{GREEDY} is able to automatically adjust the prioritization of web pages to new optimal solutions when the total bandwidth changes, without centralized computation. In this example, the total bandwidth starts from $R = 100$, and suddenly changes to $R = 150$, before changing back to $R = 100$, with $m=1000$ web pages to crawl, running for $t \leq 400$. The GREEDY policy is used to select web pages, and there is no extra computation when the total bandwidth changes. Figure \ref{fig:changing_freshness} shows that the accuracy automatically rises to the new optimal level when the bandwidth increases, and automatically falls back to the original optimal level when the bandwidth decreases back. The accuracy which is plotted in Figure Figure \ref{fig:changing_freshness} is computed on the last $1000$ crawl events for every time step.

\begin{figure*}[t]
  \centering
  \includegraphics[width=0.25\linewidth]{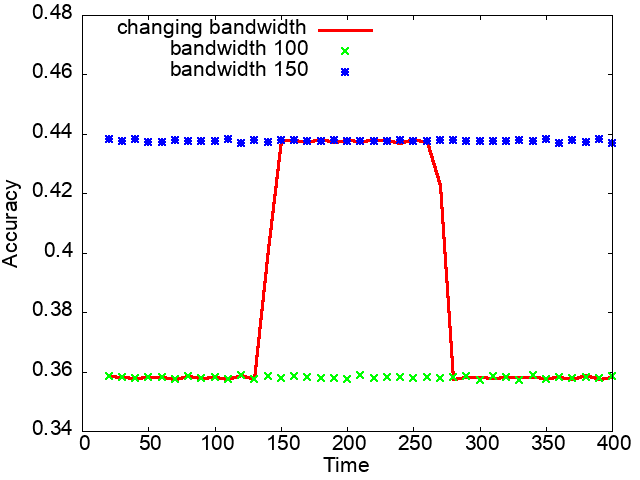}
  \caption{Accuracy of GREEDY over time. The red line shows how the accuracy of GREEDY changes over time, when the total bandwidth starts from 100, and increases to 150 at time 133, before decreasing back to 100 at time 266. The green line shows the accuracy of GREEDY, when the total bandwidth is always 100. The blue line shows the accuracy of GREEDY, when the total bandwidth is always 150.  \label{fig:changing_freshness}}
\end{figure*}

\section{Estimating model parameters}
\label{sec:estimation}
We directly observe all request events and CI signals, hence we can estimate the importance parameters $\mu$ and the rate of the CIS $\gamma$ with good accuracy based on the overall frequency of logged events.
The unobserved change rate $\alpha$ and ``goodness'' of CI signals $\beta$ are harder to estimate, since only partial knowledge available of the content change events.

A naive approach is to use empirical crawl events which are observed to create an approximation of the change sequence and directly calculate the empirical precision and recall of the CI signals based on this approximation.
To evaluate this approach, 
we conduct the following experiment: We sample precision and recall values uniformly between $[0.2,0.95]$. The expected lenth of the change process is sampled uniformly between $[2,20]$ and the relative rate of crawls is set between four times to one quarter of the change rate.
We use the models to create synthetic datasets of time horizon $100000$ and reconstruct precision and recall based on either a statistical approach that computes
\begin{align*}
    &\text{precision}=\frac{\text{number of intervals with CI signal and change}}{\text{number of intervals with change}}\\
    &\text{recall}=\frac{\text{number of intervals with CI signal and change}}{\text{number of intervals with change}}\,,
\end{align*}
or fitting the linear model for $\alpha$ and $\beta$ to the data and compute precision and recall based on these.

Figure~\ref{fig:prec_rec_estimate} shows that the statistical estimator is clearly biased.

\begin{figure}[h]
  \centering
  \includegraphics[width=0.5\linewidth]{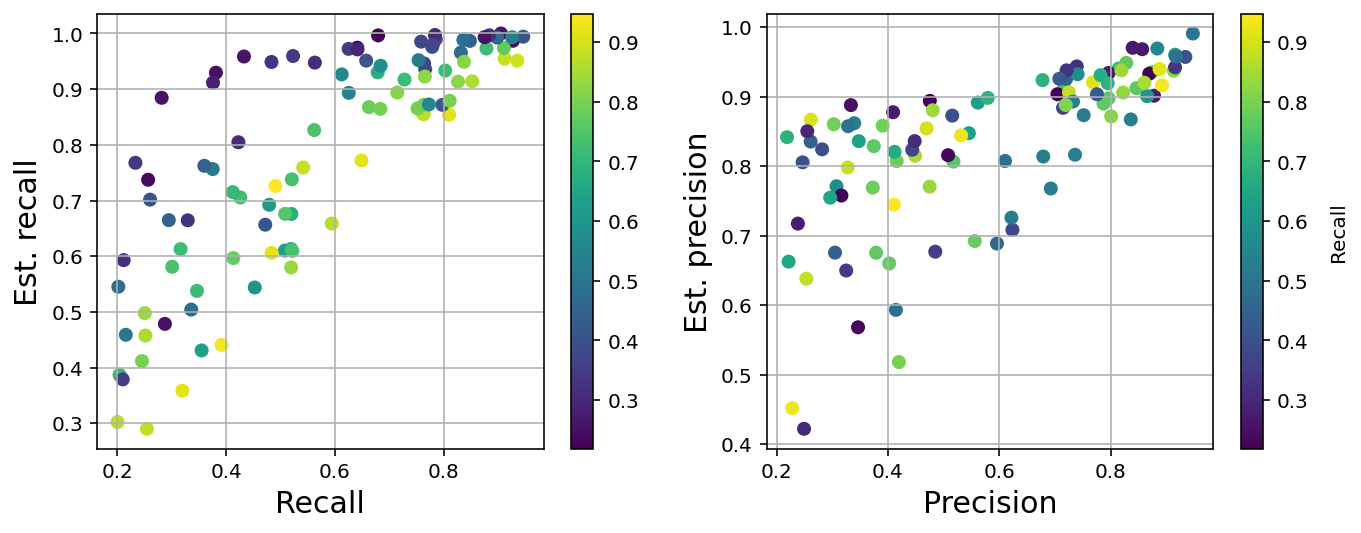}
  \caption{Bias of the naive estimator of precision and recall of CIS. \label{fig:prec_rec_estimate}}
\end{figure}

We achieve more faithful results by fitting a model to the empirical data.
We collect the data $\left((\binom{\tau^{\textsc{elap}}}{n^{\textsc{cis}}}_1, z_1),\dots\right)$, where $z_i$ is a binary variable indicating whether there has been a change between crawl $i-1$ and $i$.
The model predicts that $z_i\sim \text{Ber}(\exp(-\langle \binom{\alpha}{\alpha\beta},\binom{\tau^{\textsc{elap}}}{n^{\textsc{cis}}}_i\rangle)) .$
Estimating the unknown parameter vector $\binom{\alpha}{\alpha\beta}$ can be done via MLE. In our experiments, we assume that these parameters are known to the policies, but in a production system this estimation can be carried out based on logged data. The absolute error of this estimator was on the order of $10^{-4}$ as shown in Figure~\ref{fig:recall_estimate_mle}. 

\begin{figure}[h]
  \centering
  \includegraphics[width=0.35\linewidth]{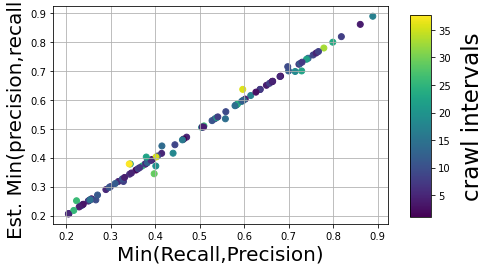}
  \caption{Bias of the MLE estimator of precision and recall of CIS. \label{fig:recall_estimate_mle}}
\end{figure}

\section{Real-world experiment}
\label{sec:realworld}
We have also performed a real-world experiment to assess the improvements brought by CISs. In this experiment we compared the performance of our method to that of its baseline variant that does not use CISs signals; this baseline is obtained by applying our continuous-to-discrete policy reduction (see Section~\ref{sec:policy_discrete}) to the optimal continuous policy given by \cite{Azar8099}; this baseline is identical to setting precision to 0 in our algorithm and not giving any change-indicating signals. The algorithms were evaluated on two randomly selected sets of web pages, each set consisting of approximately 1 billion URLs coming from about 10,000 web hosts, with a crawl rate limit of 10K pages/sec in total. In this setting, our algorithm achieved roughly 10-20\% of refresh-crawling bandwidth saving on the affected hosts (for which CISs were available) depending on the noise level of the sitemap signals, while keeping at least the same or improved freshness, and the resulting freed-up extra crawling bandwidth improved freshness on other pages.

To save on computing the crawl value for each URL at every time step, in this experiments URLs were (dynamically) classified into lower and higher tiers, based on their crawl values, and the crawl values were recomputed more often for URLs in higher tiers.

\end{document}
\endinput